\newtheorem{theorem}{Theorem}
\newtheorem{definition}{Definition}
\DeclareMathAlphabet\mathbfcal{OMS}{cmsy}{b}{n}
\title{OpenViewer: Openness-Aware Multi-View Learning}
\author{
    Shide Du\textsuperscript{\rm 1,}\textsuperscript{\rm 2}, Zihan Fang\textsuperscript{\rm 1,}\textsuperscript{\rm 2}, Yanchao Tan\textsuperscript{\rm 1,}\textsuperscript{\rm 2}, Changwei Wang\textsuperscript{\rm 3}, Shiping Wang\textsuperscript{\rm 1,}\textsuperscript{\rm 2}, Wenzhong Guo\textsuperscript{\rm 1,}\textsuperscript{\rm 2}\thanks{Corresponding author.}\\
}
\begin{document}

\maketitle

\begin{abstract}
Multi-view learning methods leverage multiple data sources to enhance perception by mining correlations across views, typically relying on predefined categories. 
However, deploying these models in real-world scenarios presents two primary openness challenges. 
\textbf{1) Lack of Interpretability:} The integration mechanisms of multi-view data in existing black-box models remain poorly explained; 
\textbf{2) Insufficient Generalization:} Most models are not adapted to multi-view scenarios involving unknown categories. 
To address these challenges, we propose OpenViewer, an openness-aware multi-view learning framework with theoretical support. 
This framework begins with a Pseudo-Unknown Sample Generation Mechanism to efficiently simulate open multi-view environments and previously adapt to potential unknown samples.
Subsequently, we introduce an Expression-Enhanced Deep Unfolding Network to intuitively promote interpretability by systematically constructing functional prior-mapping modules and effectively providing a more transparent integration mechanism for multi-view data. 
Additionally, we establish a Perception-Augmented Open-Set Training Regime to significantly enhance generalization by precisely boosting confidences for known categories and carefully suppressing inappropriate confidences for unknown ones.
Experimental results demonstrate that OpenViewer effectively addresses openness challenges while ensuring recognition performance for both known and unknown samples.
\end{abstract}

%

\section{Introduction}\label{Introduction}

Multi-view learning has emerged as a prominent area of artificial intelligence, focusing on leveraging diverse data sources to enhance perception \cite{Tan24AnEffective, Yu24DVSAI}. 
This learning paradigm processes real-world objects from various extractors or sensors, exploiting correlations across multiple views to enhance performance in applications like computer vision \cite{Ning24Differentiable}, natural language processing \cite{Song24ADualWay}, large-scale language models \cite{Guo23ViewRefer}, and more \cite{Pei23Fewshot, YeL24MileCut}.
However, traditional multi-view methods, whether heuristic \cite{Zhang23Let, Xiao24MultiViewMaximum} or deep learning \cite{Xu23SelfWeighted, Yang24CapMax}, typically operate under the assumption that all samples belong to known categories \cite{Du23Bridging}. 
When deployed in real-world settings, these approaches encounter two significant openness challenges, as illustrated in Fig. \ref{Framework0}.
\textbf{Challenge \uppercase\expandafter{\romannumeral1}: Lack of Interpretability.} These black-box methods often lack of explanation in the integration process of multi-view data involving both known and unknown category samples.
This opacity undermines their reliability in open scenarios.
\textbf{Challenge \uppercase\expandafter{\romannumeral2}: Insufficient Generalization.} Trained on known samples-based datasets, existing multi-view closed-set methods fail to identify unknown categories during testing, frequently mislabeling them as known with unduly high confidences.
Consequently, they struggle to generalize to multi-view environments containing unknown samples. 
This issue arises because the models are not preemptively adapted to the range of potential unknown categories.

\begin{figure}[t]
  \centering
  \includegraphics[width=0.48\textwidth]{./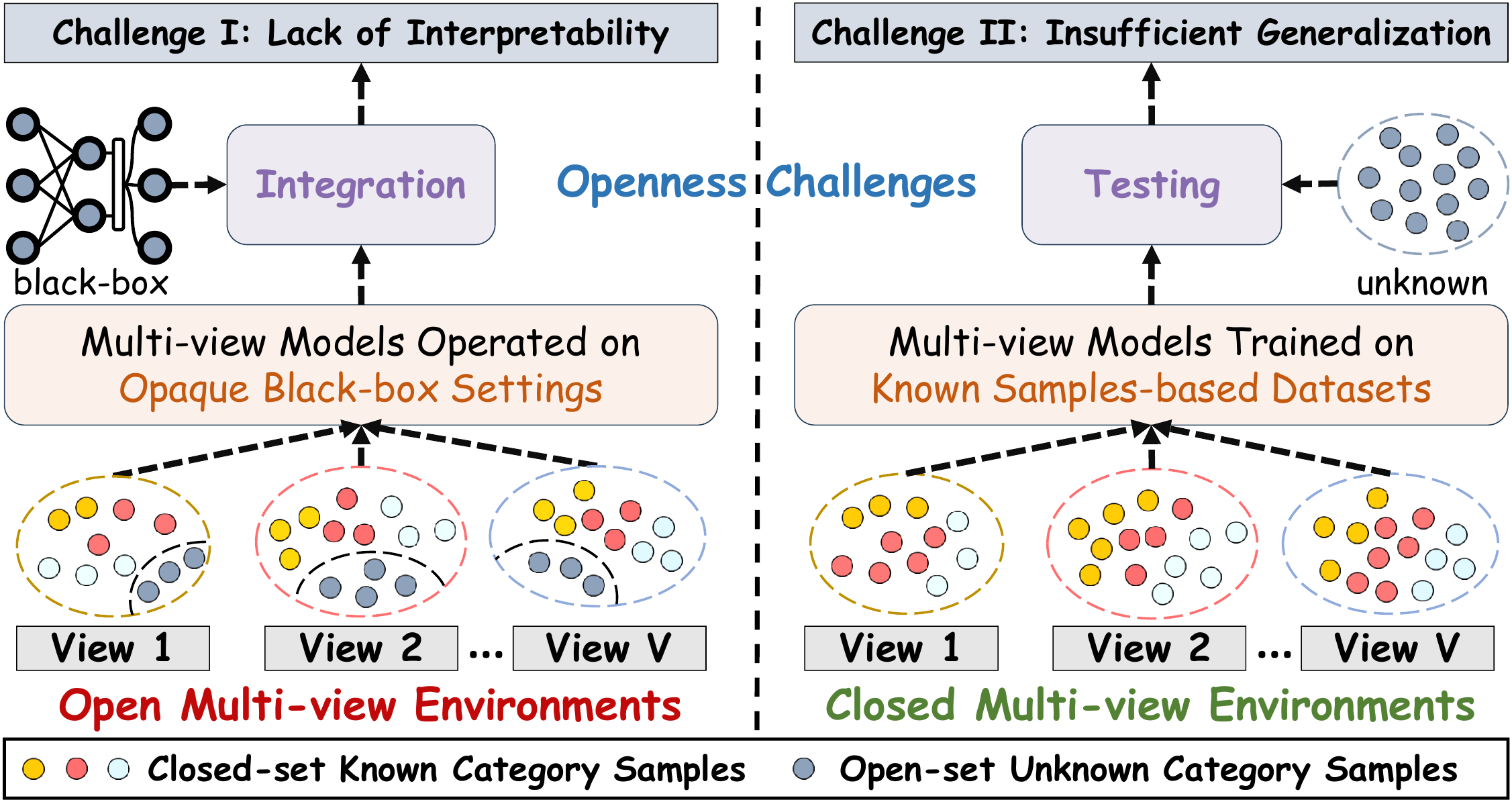}\\
  \caption{Two multi-view environments and challenges.}
  \label{Framework0}
\end{figure}

\begin{figure*}[t]
  \centering
  \includegraphics[width=0.98\textwidth]{./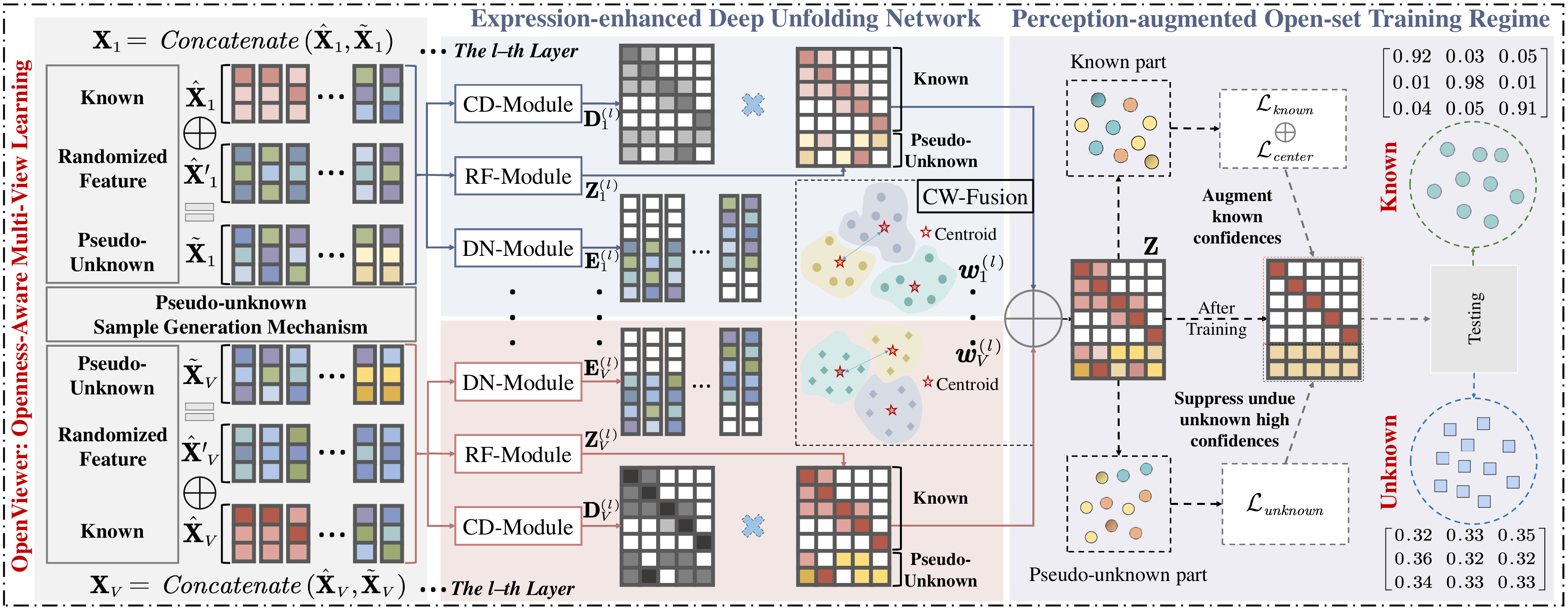}\\
  \caption{An overview of the proposed openness-aware multi-view learning framework (OpenViewer).}
  \label{Framework}
\end{figure*}

To effectively address these challenges, we propose OpenViewer, an openness-aware multi-view learning framework designed for real-world environments, as outlined in Fig. \ref{Framework}.  
OpenViewer starts with a pseudo-unknown sample generation mechanism, allowing the model to efficiently simulate open multi-view environments and previously adapt to potential unknown samples.
Grounded on ADMM iterative solutions with functionalized priors, we derive an interpretable multi-view feature expression-enhanced deep unfolding network, comprising redundancy removal, dictionary learning, noise processing, and complementarity fusion modules. 
The corresponding functions of each module are intuitively reflected in the prior-mapping optimization process, offering a more transparent integration mechanism.
Additionally, we implement a multi-view sample perception-augmented open-set training regime to further boost confidences for known categories and suppress inappropriate confidences for unknown ones.
This enables dynamic perception of known and unknown samples, thereby improving generalization.
Finally, we present theoretical analysis and proof to substantiate OpenViewer’s ability to increase both interpretability and generalization.
The main contributions of OpenViewer can be listed as follows:
\begin{itemize}
\item \textbf{\textit{Formulation of OpenViewer:}} We propose OpenViewer, an openness-aware multi-view learning framework designed to tackle the challenges of interpretability and generalization, backed by theoretical guarantees.
\item \textbf{\textit{Openness-aware models design:}} We develop an interpretable expression-enhanced deep unfolding network, bolstered by a pseudo-unknown sample generation mechanism and a perception-augmented open-set training regime, to improve adaptation and generalization.
\item \textbf{\textit{Extensive experiments on real-world datasets:}} Experimental results validate OpenViewer's effectiveness in addressing openness challenges, demonstrating superior recognition performance for both known and unknown.
\end{itemize}

\section{Related Work}\label{RelatedWork}

\subsubsection{Two Multi-view Learning Methods.} 
1) \textbf{Heuristic methods} leverage multi-view prior knowledge to formulate and iteratively solve joint optimization objectives, leading to optimal multi-view learning solutions.
For example, Wan \textit{et al.} \cite{Wan23AutoWeighted} proposed an auto-weighted multi-view optimization problem for large-scale data.
Yu \textit{et al.} \cite{Yu24ANonparametric} devised a non-parametric joint optimization functions to partition multi-view data;
2) \textbf{Deep learning methods} utilize network architectures to automate the optimization of multi-view learning solutions and parameters.
For example, Xiao \textit{et al.} \cite{Xiao23DualFusion} performed multi-view deep learning by the consistency and complementarity.
Xu \textit{et al.} \cite{Xu24DeepVariational} introduced the view-specific encoders and product-of-experts approach to aggregate multi-view information.
Further work on multi-view learning can be discovered in \cite{Chen20Multi, Wang22Alignthen, Yang2022Robust, Liu24Samplelevel} (heuristic) and \cite{Yang21Partially, Lin23Dualcontrastive, Du2024UMCGL, Wang2024Scalable} (deep learning).

\subsubsection{Interpretable Deep Unfolding Networks.} 
Deep unfolding networks, derived from iterative solutions that encapsulate domain-specific priors and functional knowledge, have achieved success while maintaining strong interpretability across multiple fields \cite{GregorL10, Bonet22Explaining, Zheng23HybridISTA, Boris24InterpretableNN}.
Some notable works, for example, Fu \textit{et al.} \cite{Fu22AModelDriven} designed a model-driven deep unfolding structure for JPEG artifacts removal.
Li \textit{et al.} \cite{Li23LRRNet} displayed a low-rank deep unfolding network for hyperspectral anomaly detection. 
Wu \textit{et al.} \cite{Wu2024Designing} constructed a deep unfolding network based on first-order optimization algorithms.
Additional similar efforts in deep unfolding networks can be traced in \cite{Zhou23LearnedImage, Weerdt24DeepUnfolding, Fang2024Beyond}.

\subsubsection{Open-set Learning.} Open-set learning seeks to extend the closed-set hypothesis by equipping models with the ability to distinguish known and unknown classes.
For instance, Dhamija \textit{et al.} \cite{Dhamija2018Reducing} introduced the negative classes for improving the efficiency of unknown rejection.
Duan \textit{et al.} \cite{Duan23Graphanomaly} formulated a subgraph-subgraph contrast to open-set graph learning into a multi-scale contrastive network.
Safaei \textit{et al.} \cite{Bardia2024Entropic} explored an entropic open-set active learning framework to select informative unknown samples. 
Related open-set learning methods can also be found in \cite{Bendale2016Towards, Du23Bridging, Gou24Test}.


\section{Openness-Aware Multi-View Learning}\label{TheProposedFramework}
\begin{table}[!htbp]
\centering
\resizebox{0.45\textwidth}{!}{
\begin{tabular}{c||c}
\toprule {Notations} & {Descriptions} \\ 
\midrule
   {$V$, $C$}   & The number of views and known classes.\\
   {$N^{o}$}   & The number of original training samples.\\
   {$N^{e}$}   & The number of pseudo-unknown training samples, $N^{e} \leq N^{o}$.\\
   {$N$}   & The number of total training samples, $N=N^{o}+N^{e}$.\\
   {$D_{v}$}   & The dimensions of the $v$-th view feature.\\
  \midrule
  {$\{\hat{\mathbf{X}}_{v}\}_{v=1}^{V}$}   & $ \hat{\mathbf{X}}_{v} \in \mathbb{R}^{N^{o} \times D_{v}}$ is the $v$-view original training samples.\\
   {$\{\tilde{\mathbf{X}}_{v}\}_{v=1}^{V}$}   & $\tilde{\mathbf{X}}_{v}\in \mathbb{R}^{N^{e} \times D_{v}}$ is the $v$-view pseudo-unknown training samples.\\
   {$\{\mathbf{X}_{v}\}_{v=1}^{V}$}   & $\mathbf{X}_{v} \in \mathbb{R}^{N \times D_{v}} = \emph{Concatenate}(\hat{\mathbf{X}}_{v}, \tilde{\mathbf{X}}_{v})$ is the $v$-view training samples.\\
    {$\hat{\mathbf{Y}}$}   & $ \hat{\mathbf{Y}} \in \{1, 2, \cdots, C\}$ is the original training labels.\\
   {$\tilde{\mathbf{Y}}$}   & $ \tilde{\mathbf{Y}} \in \{C+1\}$ is the pseudo-unknown training labels.\\
   {$\mathbf{Y}$}  & $ \mathbf{Y} \in \{1, 2, \cdots, C, C+1\} $ is the total training labels.\\
   \midrule
    {$\{\mathbf{Z}_{v}\}_{v=1}^{V}$}   & $\mathbf{Z}_{v} \in \mathbb{R}^{N \times C}$ is the $v$-th redundancy free representation.\\
   {$\{\mathbf{D}_{v}\}_{v=1}^{V}$}   & $\mathbf{D}_{v} \in \mathbb{R}^{C \times D_{v}}$ is the $v$-th consistency dictionary learning matrix.\\
   {$\{\mathbf{E}_{v}\}_{v=1}^{V}$}   & $\mathbf{E}_{v} \in \mathbb{R}^{N \times D_{v}}$ is the $v$-th diversity noise processing matrix.\\
   {$\{\mathbf{w}_{v}\}_{v=1}^{V}$}   & $\mathbf{w}_{v}$ is the $v$-th complementarity fusion contribution weight.\\
   {$\mathbf{Z}$}   & $\mathbf{Z} \in \mathbb{R}^{N \times C}$ is the complementarity enhanced representation.\\
   \midrule
 {$\mathcal{D}_{train}$}   & Multi-view training set,  $\mathcal{D}_{train}=\mathcal{D}_{original}\cup\mathcal{D}_{generated}$.\\
 {$\mathcal{D}_{original}$}   & Original training set, $\mathcal{D}_{original}=\{\{\hat{\mathbf{X}}_{v}\}_{v=1}^{V}, \hat{\mathbf{Y}}\}$.\\
  {$\mathcal{D}_{generated}$}   & Pseudo-unknown training set, $\mathcal{D}_{generated}=\{\{\tilde{\mathbf{X}}_{v}\}_{v=1}^{V}, \tilde{\mathbf{Y}}\}$.\\
  \bottomrule
\end{tabular}}
\caption{Essential notations and descriptions.}
\label{SymbolicNormalization}
\end{table}
\subsection{Pseudo-unknown Sample Generation Mechanism}\label{Pseudounknown} 
The necessary notations are first listed in Table \ref{SymbolicNormalization}.
To better tackle open-set environments, inspired by Mixup \cite{ZhangCDL18}, we utilize a pseudo-unknown sample generation mechanism to efficiently simulate open multi-view environments and previously adapt to potential unknown samples.
Specifically, a perturbation parameter $\zeta \in [0, 1]$ is sampled from a $\zeta \sim \mathbf{Beta}(\omega, \omega)$ distribution. 
Based on this, we randomly select $\hat{\boldsymbol{x}}^{(i)}_{v}$ and $\hat{\boldsymbol{x}}'^{(j)}_{v}$ from the $v$-th view original feature $\hat{\mathbf{X}}_{v}$, ensuring they belong to different categories.  
Then, the pseudo-unknown sample $\tilde{\boldsymbol{x}}_{v}$ is generated as
\begin{equation}
\begin{array}{ll}\label{genPseudo}
\tilde{\boldsymbol{x}}_{v}= \zeta\hat{\boldsymbol{x}}^{(i)}_{v}+\left(1-\zeta\right)\hat{\boldsymbol{x}}'^{(j)}_{v},
\end{array}
\end{equation}
where $\zeta$ determines the extent to which each original sample contributes to the features of the generated samples.
Using Eq. \eqref{genPseudo}, we generate a set of pseudo-unknown samples, $\mathcal{D}_{generated}$, and merge it with $\mathcal{D}_{original}$ to prepare the model for adapting to unknown classes.


\subsection{Expression-enhanced Deep Unfolding Network}\label{InterpretableDeepNetwork}

Subsequently, we design an interpretable expression-enhanced deep unfolding network to clarify the multi-view integration principle.
We first abstract four multi-view functionalized priors as shown in Fig. \ref{Framework2}, including:
1) \textbf{View-specific Redundancy} denotes the redundant similar features within each view;
2) \textbf{View-specific Consistency} indicates the dictionary coefficients, reflecting each representation's consistent contribution to the reconstruction of each view;
3) \textbf{View-specific Diversity} signifies the diverse noise information within each view;
4) \textbf{Cross-view Complementarity} refers to processed cross-view representations that can complementary, enhance and express each other.
Following that, we first consider the three view-specific priors, and construct a generalized expression-enhanced optimization problem as
\begin{equation}\label{q3}
\begin{array}{ll}
\mathop{\min}\limits_{\mathbf{Z}_{v}, \mathbf{D}_{v}, \mathbf{E}_{v}}\sum\limits_{{v=1}}^{V}\Big(\mathcal{I}(\mathbf{X}_{v}, \mathbf{Z}_{v}, \mathbf{D}_{v}, \mathbf{E}_{v})+\alpha\mathbf{\Omega}(\mathbf{Z}_{v})\\~~~~~~~~~~~~~~~~~~~~~~~~+\beta\mathbf{\Psi}(\mathbf{D}_{v})+\gamma\mathbf{\Phi}(\mathbf{E}_{v})\Big),
\end{array}
\end{equation}
where $\alpha$, $\beta$, $\gamma$ are the regularization parameters, and generalized Problem \eqref{q3} includes the above functionalized priors that can be further concretized as
\begin{equation}\label{openproblem}
\begin{array}{ll}
\mathop{\min}\limits_{\mathbf{Z}_{v}, \mathbf{D}_{v}, \mathbf{E}_{v}}\sum\limits_{{v=1}}^{V}\Big(\frac{1}{2}\Vert\mathbf{X}_{v}-\mathbf{Z}_{v}\mathbf{D}_{v}-\mathbf{E}_{v}\Vert_{F}^{2}+\alpha \Vert\mathbf{Z}_{v}\Vert_{1} \\~~~~~~~~~~~~~~~~~~~~~~~~+\frac{\beta}{2}\Vert\mathbf{D}_{v}\Vert_{F}^{2}+\gamma \Vert\mathbf{E}_{v}\Vert_{2, 1}\Big).
\end{array}
\end{equation}

\begin{figure}[t]
  \centering
  \includegraphics[width=0.48\textwidth]{./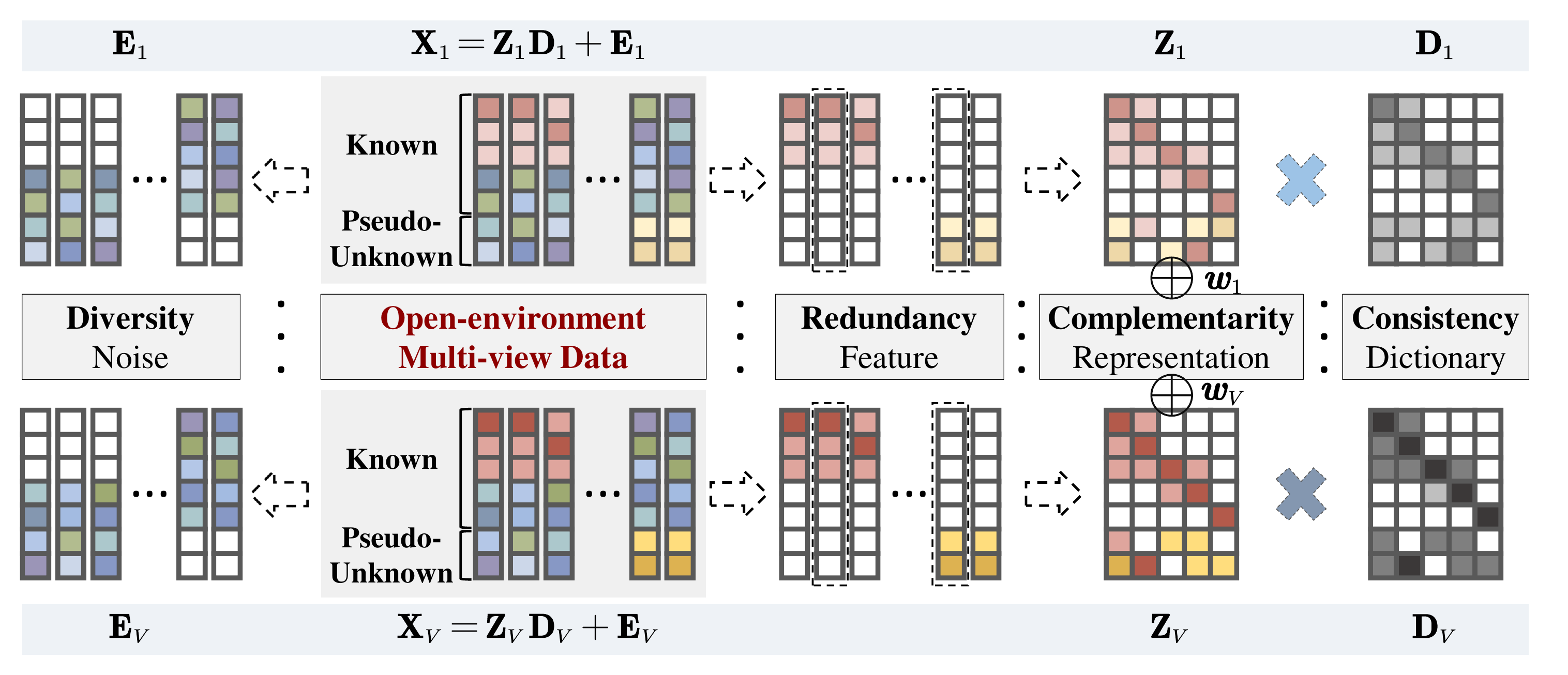}\\
  \caption{Four multi-view priors and their relationships.}
  \label{Framework2}
\end{figure}
Problem \eqref{openproblem} aims to learn a \textit{redundancy} free representation $\mathbf{Z}_{v}$ using $l_1$-norm $\Vert \cdot \Vert_{1}$, while optimizing \textit{consistency} dictionary coefficients $\mathbf{D}_{v}$ and capturing \textit{diversity} noise $\mathbf{E}_{v}$ with the $l_{2, 1}$-norm $\Vert \cdot \Vert_{2, 1}$.
So $\mathbf{X}_v$ can be expressed as a linear combination $\mathbf{Z}_v\mathbf{D}_v+\mathbf{E}_v$.
To optimize such a mixed non-convex problem \eqref{openproblem} consisting of smooth terms $\mathcal{I}(\cdot), \mathbf{\Psi}(\cdot)$ and non-smooth terms $\mathbf{\Omega}(\cdot), \mathbf{\Phi}(\cdot)$, ADMM \cite{Boyd2011Distributed} is employed to decompose it into three sub-problems for solving.
For $\mathbf{Z}_{v} = \{\hat{\mathbf{Z}}_{v}, \tilde{\mathbf{Z}}_{v}\}$, $\mathbf{E}_{v} = \{\hat{\mathbf{E}}_{v}, \tilde{\mathbf{E}}_{v}\}$, $\mathbf{D}_{v} = \{\hat{\mathbf{D}}_{v}, \tilde{\mathbf{D}}_{v}\}$ sub-problems, where $\hat{\mathbf{Z}}_{v}$, $\hat{\mathbf{E}}_{v}$, and $\hat{\mathbf{D}}_{v}$ are known, and $\tilde{\mathbf{Z}}_{v}$, $\tilde{\mathbf{E}}_{v}$, and $\tilde{\mathbf{D}}_{v}$ are pseudo-unknown, we utilize proximal gradient descent method \cite{BeckTeboulle09AFast} to solve $\mathbf{Z}_{v}$ and $\mathbf{E}_{v}$ variables, while $\mathbf{D}_{v}$ variable has a closed-form solution, obtained as
\begin{equation}\label{t3}
\begin{aligned}
\mathbf{Z}_{v}^{(l+1)}&\leftarrow\mathbfcal{S}_{\frac{\alpha}{L_{p_{v}}}}\left(\mathbf{Z}_{v}^{(l)}-\frac{1}{L_{p_{v}}}\nabla \mathcal{I}(\mathbf{Z}_{v}^{(l)})\right), \\ \mathbf{E}_{v}^{(l+1)}&\leftarrow\mathbfcal{P}_{\frac{\gamma}{L_{p_{v}}}}\left(\mathbf{E}_{v}^{(l)}-\frac{1}{L_{p_{v}}}\nabla \mathcal{I}(\mathbf{E}_{v}^{(l)})\right), \\
\mathbf{D}_{v}^{(l+1)} &\leftarrow \{\nabla\mathcal{I}(\mathbf{D}_{v}^{(l)})+\nabla\mathbf{\Psi}(\mathbf{D}_{v}^{(l)}) = 0\},
\end{aligned}
\end{equation}
where $\mathbfcal{S}_{\frac{\alpha}{L_{p_{v}}}}(\cdot)$ and $\mathbfcal{P}_{\frac{\gamma}{L_{p_{v}}}}(\cdot)$ are the redundancy and diversity proximal operators, respectively.
$\nabla(\cdot)$ denotes the gradient of the current variable, $L_{p_{v}}$ is the $v$-th Lipschitz constant of $\nabla\mathcal{I}(\cdot)$, and $l$ is the current iteration.
Subsequently, we expand the gradient-related notations, detailed as
\begin{equation}\label{unfolding}
\left\{
\begin{array}{ll}
\mathbf{Z}_{v}^{(l+1)} & \leftarrow \mathbfcal{S}_{\frac{\alpha}{L_{p_{v}}}}\Big(\mathbf{Z}_{v}^{(l)}-\frac{1}{L_{p_{v}}}(\mathbf{Z}_{v}^{(l)}\mathbf{D}_{v}^{(l)}(\mathbf{D}_{v}^\top)^{(l)}\\& ~~~~~- \mathbf{X}_{v}(\mathbf{D}_{v}^\top)^{(l)}+\mathbf{E}_{v}^{(l)}(\mathbf{D}_{v}^\top)^{(l)})\Big), \\
& \Rightarrow \mathbfcal{S}_{\frac{\alpha}{L_{p_{v}}}}\Big(\mathbf{Z}_{v}^{(l)}(\mathbf{I}-\frac{1}{L_{p_{v}}}\mathbf{D}_{v}^{(l)}(\mathbf{D}_{v}^\top)^{(l)}) \\&~~~~~ + \frac{1}{L_{p_{v}}}(\mathbf{X}_{v}-\mathbf{E}_{v}^{(l)})(\mathbf{D}_{v}^\top)^{(l)}\Big), \\ 
\mathbf{D}_{v}^{(l+1)} & \leftarrow \Big((\mathbf{Z}_{v}^\top)^{(l+1)}\mathbf{Z}_{v}^{(l+1)}+\beta\mathbf{I}\Big)^{-1}(\mathbf{Z}_{v}^\top)^{(l+1)}\\&~~~~~(\mathbf{X}_{v}-\mathbf{E}_{v}^{(l)}), \\ 
\mathbf{E}_{v}^{(l+1)} & \leftarrow \mathbfcal{P}_{\frac{\gamma}{L_{p_{v}}}}\left(\mathbf{X}_{v} - \mathbf{Z}_{v}^{(l+1)}\mathbf{D}_{v}^{(l+1)}\right),
\end{array}
\right.
\end{equation}
where $\mathbf{I} \in \mathbb{R}^{C \times C}$ is an identity matrix.
Thus far, we have used the ADMM optimizer to solve the corresponding sub-problems and derive iterative solutions for \textit{redundancy}, \textit{consistency}, \textit{diversity} priors.
Then, an inter-class discretion-guided weighting method is applied to account for the cross-view \textit{complementary} prior.
Intuitively, the closer the sample centroids are within a view, the less complementary information each view provides.
Base on this, we dynamically perceive the centroid as $\boldsymbol{o}^{(i)}_{v}=\frac{1}{\left|B_i\right|} \sum\limits_{j:\mathbf{Y}_j=i} \boldsymbol{z}^{(j)}_{v}$ and then calculate the inter-class distances between all centroids for each view, where $\left|B_i\right|$ is the number of training instances in category $i$.
To prevent the largest distance between two classes from overly influencing the measure of inter-class discrepancy, we focus on the minimum distance $\boldsymbol{d}_v$ between any two categories, defined as $\boldsymbol{d}_v=\min\left\{\operatorname{Dist}\left(\boldsymbol{o}_{v}^{(i)}, \boldsymbol{o}_{v}^{(j)}\right)\right\}, i, j \in \mathbf{Y} \text{ and } i \neq j$.
Here, $\operatorname{Dist}\left(\cdot, \cdot\right)$ is the distance function, which in this work is the Euclidean distance.
This strategy provides a balanced assessment of the complementary contribution of different views.
The higher the complementarity information between views (\emph{i.e.,} the greater the distance between centroids), the greater their assigned weights, denoted as
\begin{equation}
\begin{array}{ll}\label{fusion3}
\mathbf{w}_v=\frac{\exp\left(-\overline{\boldsymbol{d}}_v\right)}{\sum_{v=1}^V{\exp\left(-\overline{\boldsymbol{d}}_v\right)}}, \overline{\boldsymbol{d}}_v=\boldsymbol{d}_v^{-1}/\left\|{\boldsymbol{d}_v^{-1}}\right\|_1,  \sum\limits_{v=1}^V \mathbf{w}_v=1,
\end{array}
\end{equation}
where $\overline{\boldsymbol{d}}_v$ is obtained by normalizing the inverse of $\boldsymbol{d}_v$ between centroids through $\ell_1$-norm.
At last, the inter-class discretion-guided weights $\{\mathbf{w}_{v}\}_{v=1}^{V}$ can be applied to perform complementary fusion $\mathbfcal{F}(\cdot)$ as
\begin{equation}\label{ZWFusion}
\mathbf{Z}^{(l+1)}\leftarrow\sum\limits_{v=1}^V \mathbf{w}_{v}^{(l+1)} \mathbf{Z}_v^{(l+1)}.
\end{equation}

Based on solutions \eqref{t3} and \eqref{ZWFusion}, the multi-view feature expression-enhanced deep unfolding network can be conceptualized as four interpretable prior-mapping modules by parameterizing alternative components \cite{Zhou23LearnedImage, Weerdt24DeepUnfolding} as 
\begin{equation}\label{stotalnet2}
\left\{\begin{array}{ll}
\textbf{RF-Module: } \mathbf{Z}_{v}^{(l+1)}&\leftarrow\mathbfcal{S}_{\theta_{v}^{(l)}}\Big(\mathbf{Z}_{v}^{(l)}\mathbf{R}+(\mathbf{X}_{v}-\mathbf{E}_{v}^{(l)})\\&~~~~~~~~~~~~~~~(\mathbf{D}_{v}^\top)^{(l)}\mathbf{U}\Big),\\
\textbf{CD-Module: } \mathbf{D}_{v}^{(l+1)}&\leftarrow\mathbf{M}(\mathbf{Z}_{v}^\top)^{(l+1)}(\mathbf{X}_{v}-\mathbf{E}_{v}^{(l)}),\\
\textbf{DN-Module: } \mathbf{E}_{v}^{(l+1)}&\leftarrow\mathbfcal{P}_{\rho_{v}^{(l)}}\left(\mathbf{X}_{v}- \mathbf{Z}_{v}^{(l+1)}\mathbf{D}_{v}^{(l+1)}\right),\\ \textbf{CW-Fusion: } \mathbf{Z}^{(l+1)}&\leftarrow\sum\limits_{v=1}^V \mathbf{w}_{v}^{(l+1)} \mathbf{Z}_v^{(l+1)},
\end{array}\right.
\end{equation}
where $\mathbf{R} = \mathbf{I}-\frac{1}{L_{p_{v}}}\mathbf{D}_{v}\mathbf{D}_{v}^\top$, $\mathbf{U} = \frac{1}{L_{p_{v}}}\mathbf{I}$, and $\mathbf{M} = (\mathbf{Z}_{v}^\top\mathbf{Z}_{v}+\beta\mathbf{I})^{-1}$. 
The learnable redundancy and diversity proximal operators $\mathbfcal{S}_{\theta_{v}^{(l)}}(\cdot)$ and $\mathbfcal{P}_{\rho_{v}^{(l)}}(\cdot)$ are the reparameterized versions of $\mathbfcal{S}_{\frac{\alpha}{L_{p_{v}}}}(\cdot)$ and $\mathbfcal{P}_{\frac{\gamma}{L_{p_{v}}}}(\cdot)$, with learnable threshold parameters $\theta_{v}$ and $\rho_{v}$, respectively.
Moreover, $\mathbfcal{S}_{\theta_{v}}(\boldsymbol{a}^{(ij)})=\sigma(\boldsymbol{a}^{(ij)}-\theta_{v})-\sigma(-\boldsymbol{a}^{(ij)}-\theta_{v})$, and $\mathbfcal{P}_{\rho_{v}}(\boldsymbol{a}^{(i)}) = \frac{\sigma(\left\|\boldsymbol{a}^{(i)}\right\|_2-\rho_{v})}{\left\|\boldsymbol{a}^{(i)}\right\|_2} \boldsymbol{a}^{(i)}$, 
if $\rho_{v}<\left\|\boldsymbol{a}^{(i)}\right\|_2$; otherwise, $0$. 
$\boldsymbol{a}^{(ij)}$ is the element in the $i$-th row and $j$-th column of the matrix, $\boldsymbol{a}^{(i)}$ is the $i$-th column of the matrix, and $\sigma(\cdot)$ can be activation functions such as ReLU, SeLU and etc.

The constructed network, incorporating these modules, can engage in multi-view expression enhancement while integrating their functions into deep networks to maintain interpretability:
1) \textbf{Redundancy Free Representation Module (RF-Module)} introduces learnable layers and redundancy-free operators to reduce redundant features and retain the most critical view information $\mathbf{Z}_{v}$;
2) \textbf{Consistency Dictionary Learning Module (CD-Module)} captures dictionary coefficients $\mathbf{D}_{v}$ within each view, denoting the consistent contribution of each $\mathbf{Z}_{v}$ to the reconstruction of $\mathbf{X}_{v}$; 
3) \textbf{Diversity Noise Processing Module (DN-Module)} develops learnable diversity operators to eliminate irrelevant information $\mathbf{E}_{v}$ caused by the noise or outliers; 
4) \textbf{Complementarity Fusion Representation Module (CW-Fusion)} implements complementary weight fusion to integrate representations as $\mathbf{Z}$ to differentiate between known and unknown. 
Unfolding network \eqref{stotalnet2} is composed of $L$ layers, with each layer corresponding to a single ADMM iteration.
The interpretability is reflected in the optimization process: 1) For multi-view known parts, it enhances expression by processing noise and integrating complementary; 
2) For multi-view unknown parts, it employs redundancy removal, noise processing, and adapts to a pseudo-unknown dictionary to highlight inappropriate unknown confidences. 
This enhanced expression provides a solid foundation for distinguishing between known and unknown, thereby boosting OpenViewer's interpretability and trustworthiness.

\subsection{Perception-augmented Open-set Training Regime}\label{TrainingRegimes}
The above interpretable network has performed feature-level integration and enhancement.
Subsequently, we design a loss regime to further augment sample-level perception and improve the model's generalization.
For known samples, we first ensure the model's ability to recognize them by applying a cross-entropy loss. 
Building on this, we promote the separation of all known classes by a distance margin term $\max(\boldsymbol{\xi}-\|\hat{\boldsymbol{z}}^{(i)}\|_{2}, 0)^2$, formalized as
\begin{equation}
\begin{array}{ll}\label{lossCla1}
 \mathcal{L}_{known} &= -\frac{1}{N^{o}}\sum\limits_{i=1}^{{N}^{o}}\sum\limits_{c=1}^{C} \left(\hat{\boldsymbol{y}}_{c}^{(i)}\log P(c \mid \hat{\boldsymbol{z}}^{(i)})\right)\\&+\sum\limits_{i=1}^{{N}^{o}}\max(\boldsymbol{\xi}-\|\hat{\boldsymbol{z}}^{(i)}\|_{2}, 0)^2,
\end{array}
\end{equation}
where $\hat{\boldsymbol{z}}^{(i)} \in \hat{\mathbf{Z}}$, $P$ is the $\operatorname{Softmax}$ score, and $\boldsymbol{\xi}$ is the distance margin.
In this way, the feature vector is pushed out of the margin $\boldsymbol{\xi}$ to make its norm as close to or greater than $\boldsymbol{\xi}$ as possible, thereby augmenting the discrimination between known class samples.
For the more critical unknown part, we aim to minimize the allocation of pseudo-unknown samples to known groups.
Therefore, we employ a $\ell_2$-norm regularization term $\|\tilde{\boldsymbol{z}}^{(i)}\|_{2}^{2}$ to ensure that OpenViewer suppresses excessive unknown high confidences, expressed as
\begin{equation}
\begin{aligned}\label{lossCla3}
 \mathcal{L}_{unknown} = -\frac{1}{C}\sum\limits_{i=1}^{{N}^{e}}\sum\limits_{c=1}^{C}\left(\log P(c \mid \tilde{\boldsymbol{z}}^{(i)}) \right)+\sum\limits_{i=1}^{{N}^{e}}\|\tilde{\boldsymbol{z}}^{(i)}\|_{2}^{2},
\end{aligned}
\end{equation}
where $\tilde{\boldsymbol{z}}^{(i)} \in \tilde{\mathbf{Z}}$, and $P(c \mid \tilde{\boldsymbol{z}}^{(i)})$ is the probability that the model predicts the pseudo-unknown sample $\tilde{\boldsymbol{z}}^{(i)}$ as belonging to category $c$.
Loss \eqref{lossCla3} ensures that the model's prediction confidence for each known class is average and penalizes pseudo-unknown that are close to known, thereby suppressing inappropriate confidences for unknown samples.

However, the above loss only increases the inter-class separability between known and unknown samples. 
To promote intra-class compactness, we use the following center loss to further separate the feature vectors of different classes as
\begin{equation}
\begin{aligned}\label{lossCla4}
\mathcal{L}_{center}=\frac{1}{2} \sum_{i=1}^{N^{o}}\left\|\hat{\boldsymbol{z}}_{c}^{(i)}-\boldsymbol{c}^{\hat{\boldsymbol{y}}_{c}^{(i)}}\right\|_{2}^{2},
\end{aligned}
\end{equation}
where $\boldsymbol{c}^{\hat{\boldsymbol{y}}_{c}^{(i)}}$ is the center vector corresponding to the $i$-th sample's true label.
Then, each category center is dynamically updated during training to better reflect the sample distribution of its corresponding category, described as
\begin{equation}
\begin{aligned}\label{lossCla6}
\Delta \boldsymbol{c}^{(j)\in C}=\frac{\sum_{i=1}^{N^{o}} \delta\left(\hat{\boldsymbol{y}}_{c}^{(i)}=j\right) \cdot\left(\boldsymbol{c}^{(j)}-\hat{\boldsymbol{z}}_{c}^{(i)}\right)}{1+\sum_{i=1}^{N^{o}} \delta\left(\hat{\boldsymbol{y}}_{c}^{(i)}=j\right)},
\end{aligned}
\end{equation}
where $\Delta \boldsymbol{c}^{(j)\in C}$ the update amount for the center of category $j$, $\delta(\cdot)$ is an indicator function that takes the value $1$ when the $i$-th sample belongs to category $j$; otherwise, $0$.
Meanwhile, the center vectors of each class are adjusted by the calculated update amounts as $\boldsymbol{c}^{\hat{\boldsymbol{y}}_{c}^{(i)}}_{new}\leftarrow \boldsymbol{c}^{\hat{\boldsymbol{y}}_{c}^{(i)}}_{old}-\Delta \boldsymbol{c}^{(j)}$, where $\boldsymbol{c}^{\hat{\boldsymbol{y}}_{c}^{(i)}}_{new}$ is the updated center vectors, while $\boldsymbol{c}^{\hat{\boldsymbol{y}}_{c}^{(i)}}_{old}$ is the old center vectors.

At last, we train the unfolding network \eqref{stotalnet2} by combining these losses to augment perception as
\begin{equation}
\begin{aligned}\label{totalloss}
\mathcal{L}_{total}=\mathcal{L}_{known}+\lambda_{1}\mathcal{L}_{unknown}+\lambda_{2}\mathcal{L}_{center},
\end{aligned}
\end{equation}
where $\lambda_{1}$ and $\lambda_{2}$ are two trade-off parameters.
The contribution of training regime \eqref{totalloss} to OpenViewer's generalization is twofold: 
1) From a feature correspondence perspective, it ensures that known parts elicit a strong response, while undue confidences of pseudo-unknown are suppressed to a low response; 
2) From an entropy perspective, it reduces the entropy of known to augment discrimination, while increasing the entropy of pseudo-unknown to ensure that they have low confidence in being classified as known, thereby further reinforcing the recognition of unknown.
OpenViewer can be summarized as Algorithm \ref{algorithmOpenViewer} in \textbf{Appendix}.

\subsection{Main Theoretical Presentation and Analysis}\label{TheoreticalAnalysis}  

\begin{theorem}\label{Theoremshow2} \textbf{(Interpretability Boundary)}
If each sub-module is convergent, then the stacked deep unfolding network consisting of all modules is bounded. 
\end{theorem}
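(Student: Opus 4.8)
The plan is to exploit a simple chain of implications: convergence of each sub-module yields uniform bounds on its iterates, each module is a bounded operator on such iterates, and a finite composition of bounded operators is bounded. First I would fix an arbitrary view $v$ and invoke the hypothesis directly: since each sub-module is convergent, the iterate sequences $\{\mathbf{Z}_{v}^{(l)}\}$, $\{\mathbf{D}_{v}^{(l)}\}$, and $\{\mathbf{E}_{v}^{(l)}\}$ each converge, and every convergent sequence in a finite-dimensional space is bounded. This furnishes uniform constants $B_{\mathbf{Z}}, B_{\mathbf{D}}, B_{\mathbf{E}}$ with $\|\mathbf{Z}_{v}^{(l)}\|_{F}\le B_{\mathbf{Z}}$, $\|\mathbf{D}_{v}^{(l)}\|_{F}\le B_{\mathbf{D}}$, and $\|\mathbf{E}_{v}^{(l)}\|_{F}\le B_{\mathbf{E}}$ for all layers $l$ and views $v$.

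Next I would verify that each module in Eq.~\eqref{stotalnet2} sends bounded inputs to bounded outputs. For the RF-Module and DN-Module the crucial fact is that the reparameterized proximal operators are non-expansive: the group-shrinkage operator satisfies $\|\mathbfcal{P}_{\rho_{v}}(\boldsymbol{a})\|_{2}\le\|\boldsymbol{a}\|_{2}$, and when $\sigma$ is $1$-Lipschitz (e.g.\ ReLU) the soft-threshold operator satisfies $|\mathbfcal{S}_{\theta_{v}}(\boldsymbol{a})|\le|\boldsymbol{a}|$ componentwise. The surrounding pieces are linear maps of controllable operator norm: $\|\mathbf{U}\|_{2}=1/L_{p_{v}}$, $\|\mathbf{R}\|_{2}\le 1+\|\mathbf{D}_{v}\|_{2}^{2}/L_{p_{v}}$ (finite by the bound on $\mathbf{D}_{v}$), and, most importantly, $\|\mathbf{M}\|_{2}\le 1/\beta$ because $\mathbf{Z}_{v}^{\top}\mathbf{Z}_{v}\succeq 0$ forces $\mathbf{Z}_{v}^{\top}\mathbf{Z}_{v}+\beta\mathbf{I}\succeq\beta\mathbf{I}$. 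Combining non-expansiveness with these finite operator norms via the triangle inequality shows each module maps bounded sets to bounded sets.

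For the fusion step I would argue by convexity. Since $\mathbf{w}_{v}^{(l+1)}\ge 0$ and $\sum_{v=1}^{V}\mathbf{w}_{v}^{(l+1)}=1$ by Eq.~\eqref{fusion3}, the triangle inequality gives $\|\mathbf{Z}^{(l+1)}\|_{F}\le\sum_{v=1}^{V}\mathbf{w}_{v}^{(l+1)}\|\mathbf{Z}_{v}^{(l+1)}\|_{F}\le\max_{v}\|\mathbf{Z}_{v}^{(l+1)}\|_{F}\le B_{\mathbf{Z}}$, so the fused output inherits the same bound with no amplification. Finally, since the network is a finite stack of $L$ layers and each layer is a composition of the four bounded modules, a straightforward induction on $l=0,1,\dots,L$ propagates a finite bound to the terminal output $\mathbf{Z}^{(L)}$, establishing boundedness of the whole network.

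The hard part will be the second step: securing non-expansiveness of $\mathbfcal{S}_{\theta_{v}}$ and $\mathbfcal{P}_{\rho_{v}}$ for a general activation $\sigma$ and controlling the linear map $\mathbf{R}$ across layers. The proximal operators are non-expansive precisely when $\sigma$ is $1$-Lipschitz, so I would either restrict to such activations or absorb the Lipschitz constant $L_{\sigma}$ into a per-layer multiplicative factor; because $L$ is finite, even an amplifying factor $(L_{\sigma}\|\mathbf{R}\|_{2})^{L}$ stays finite, so the conclusion survives. The one genuinely delicate point, confirming $\|\mathbf{M}\|_{2}\le 1/\beta$ holds uniformly in $l$, follows from $\beta>0$ independently of the iterate, which keeps the bound uniform throughout.
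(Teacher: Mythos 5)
Your proof is sound, but it takes a genuinely different route from the paper's. The paper proves this statement in two stages: its real technical content lives in a separate contraction-mapping lemma (its Theorem 1 in the appendix), which vectorizes the RF-Module update via the Kronecker product and invokes the Banach fixed-point theorem with a damping factor $\mu\in[0,1)$ to show each parameterized module is a contraction admitting a unique fixed point, hence convergent from any initialization; the proof of the stacking statement itself is then a two-line assertion that convergent sub-modules, by ``coordinating'' their solutions, yield a bounded global solution. You instead take the convergence hypothesis at face value (convergent sequences in finite dimensions are bounded) and make the stacking step---precisely the step the paper glosses over---rigorous: non-expansiveness of $\mathbfcal{S}_{\theta_{v}}$ and $\mathbfcal{P}_{\rho_{v}}$, finite operator norms of $\mathbf{R}$, $\mathbf{U}$, and $\mathbf{M}$ (your observation that $\mathbf{Z}_{v}^{\top}\mathbf{Z}_{v}+\beta\mathbf{I}\succeq\beta\mathbf{I}$ forces $\|\mathbf{M}\|_{2}\le 1/\beta$ uniformly in $l$ is exactly the right uniformity check), the convexity of the CW-Fusion weights so the fusion never amplifies norms, and induction over the finitely many layers of network \eqref{stotalnet2}. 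What each approach buys: the paper's fixed-point machinery yields uniqueness of the limit and independence from initialization (which its Remark 1 leans on) and is the natural tool if the number of unfolding iterations is taken to infinity, but it requires the contraction constant $\mu<1$ and leaves the composition argument informal; your argument is more elementary, needs only $\beta>0$, finite depth $L$, and Lipschitz activations (absorbing $L_{\sigma}$ into a finite per-layer factor, as you note), and it proves the literal implication stated---boundedness of the stacked network given convergent sub-modules---in full detail.
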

\subsubsection{Remark 1.}
Supported by \textbf{Theorem \ref{Theoremshow2}}, the interpretable deep unfolding network \eqref{stotalnet2} will be bounded regardless of the initial multi-view cases with known and pseudo-unknown, indicating that information from different views can be reasonably integrated and \textbf{interpreted} in mixed scenarios, thereby improving the trustworthiness.
\begin{theorem}\label{Theoremshow5} \textbf{(Generalization Support)}
For the fixed step size (\emph{i.e.,} $\eta_{t} = \eta$) as $T \rightarrow \infty$, and given the existing upper boundary $\epsilon$, the difference $\mathcal{L}_T^*-\mathcal{L}^*$ generalizes to $\frac{\eta \epsilon^2}{2}$ with a convergence rate $\mathcal{O}(1/T)$.
\end{theorem}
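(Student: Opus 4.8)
The plan is to view the end-to-end training of network \eqref{stotalnet2} under the combined loss \eqref{totalloss} as a fixed-step gradient descent on the trainable parameters, collected into a vector $\theta_t$, and to recover the claimed asymptotic gap through the classical telescoping argument for gradient methods with bounded gradients. I write the update as $\theta_{t+1}=\theta_t-\eta\nabla\mathcal{L}_{total}(\theta_t)$, let $\theta^{*}$ denote a minimizer attaining $\mathcal{L}^{*}$, and let $\mathcal{L}_T^{*}$ be the best (equivalently, averaged) loss value over the first $T$ iterations. The hypothesis of an existing upper boundary $\epsilon$ is read as a uniform gradient bound $\|\nabla\mathcal{L}_{total}(\theta_t)\|\leq\epsilon$ along the trajectory.

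First I would expand the squared distance to the optimum after a single step, obtaining $\|\theta_{t+1}-\theta^{*}\|^{2}=\|\theta_t-\theta^{*}\|^{2}-2\eta\langle\nabla\mathcal{L}_{total}(\theta_t),\theta_t-\theta^{*}\rangle+\eta^{2}\|\nabla\mathcal{L}_{total}(\theta_t)\|^{2}$. Invoking convexity of the loss along the descent path bounds the inner product from below by the function gap, $\langle\nabla\mathcal{L}_{total}(\theta_t),\theta_t-\theta^{*}\rangle\geq\mathcal{L}_{total}(\theta_t)-\mathcal{L}^{*}$, while the gradient-norm term is at most $\eta^{2}\epsilon^{2}$. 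Rearranging yields the per-step inequality $2\eta(\mathcal{L}_{total}(\theta_t)-\mathcal{L}^{*})\leq\|\theta_t-\theta^{*}\|^{2}-\|\theta_{t+1}-\theta^{*}\|^{2}+\eta^{2}\epsilon^{2}$.

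Next I would sum this inequality for $t=0,\dots,T-1$; the distance terms telescope and the nonnegative terminal distance may be discarded, giving $2\eta\sum_{t=0}^{T-1}(\mathcal{L}_{total}(\theta_t)-\mathcal{L}^{*})\leq\|\theta_0-\theta^{*}\|^{2}+T\eta^{2}\epsilon^{2}$. Dividing by $2\eta T$ and using that $\mathcal{L}_T^{*}-\mathcal{L}^{*}$ is no larger than the average gap bounds it by $\frac{\|\theta_0-\theta^{*}\|^{2}}{2\eta T}+\frac{\eta\epsilon^{2}}{2}$. The initial-distance term decays as $\mathcal{O}(1/T)$, so as $T\rightarrow\infty$ it vanishes and leaves precisely $\mathcal{L}_T^{*}-\mathcal{L}^{*}\to\frac{\eta\epsilon^{2}}{2}$, which is the asserted limit and rate.

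The hard part will be justifying the convexity step, since $\mathcal{L}_{total}$ is realized through the deep unfolding network and is not globally convex in $\theta$. I would address this either by restricting the analysis to a neighborhood of $\theta^{*}$ on which $\mathcal{L}_{total}$ behaves convexly, or by replacing the function-gap bound with a non-convex surrogate that tracks $\min_{t<T}\|\nabla\mathcal{L}_{total}(\theta_t)\|^{2}$ via a descent lemma; in either route the per-module boundedness already secured by Theorem \ref{Theoremshow2} is what I would lean on to keep the relevant curvature and iterates controlled.
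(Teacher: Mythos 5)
Your proposal follows essentially the same route as the paper's own proof: both model training as gradient descent with a uniform gradient bound $\epsilon$ (the paper establishes this bound in its Theorem 3), expand $\|\mathbf{\Theta}_{t+1}-\mathbf{\Theta}^*\|_F^2$, invoke the first-order convexity inequality $\mathcal{L}(\mathbf{\Theta}^*)\geq\mathcal{L}(\mathbf{\Theta}_t)+\nabla\mathcal{L}_t^\top(\mathbf{\Theta}^*-\mathbf{\Theta}_t)$, telescope, pass to the best iterate, and then specialize to the fixed step size to obtain the bound $\bigl(\|\mathbf{\Theta}_1-\mathbf{\Theta}^*\|_F^2/(T\eta)+\eta\epsilon^2\bigr)/2$, yielding the $\mathcal{O}(1/T)$ rate and radius $\frac{\eta\epsilon^2}{2}$. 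The only difference is that you explicitly flag the convexity assumption as the weak point (and suggest remedies), whereas the paper invokes the same inequality silently, so your argument is, if anything, more carefully qualified.
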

\subsubsection{Remark 2.} 
\textbf{Theorem \ref{Theoremshow5}} theoretically ensures that OpenViewer maintains stable \textbf{generalization} by learning a true distribution within a convergence radius of $\frac{\eta \epsilon^2}{2}$ and a convergence rate $\mathcal{O}(1/T)$, even when encountering unknown. 
\subsubsection{Complexity.} 
The time complexity of OpenViewer with $L$ layers costs $\mathcal{O}\left(\left(NC(C+D_{v}+V)+C^{2}D_{v}\right)L\right)$, and the space complexity of OpenViewer denotes $\mathcal{O}\left(N(D_{v}+C)V\right)$.
Additional proofs and details of the main theories and complexity can be found in \textbf{Appendix}.

\section{Experiments and Studies}\label{Experiments}


\subsubsection{Datasets, Compared Methods, and Evaluation Metric.} 
\begin{table}[t]
\centering
\small
\resizebox{0.45\textwidth}{!}{
\begin{tabular}{c||c|c|c|c}
\toprule
Datasets & \# Samples & \# Views & \# Feature Dimensions & \# Classes \\
\midrule
Animals & 10,158 & 2 & 4,096/4,096 & 50 \\
AWA & 30,475 & 6 & 2,688/2,000/252/2,000/2,000/2,000 & 50 \\
NUSWIDEOBJ  & 30,000 & 5 & 65/226/145/74/129 & 31 \\  
VGGFace2-50  & 34,027 & 4 & 944/576/512/640 & 50 \\
\midrule
ESP-Game & 11,032 & 2 & 100/100 & 7 \\
NUSWIDE20k & 20,000 & 2 & 100/100 & 8 \\
\bottomrule
\end{tabular}}
\caption{A brief description of the tested datasets.}
\label{Datadescription}
\end{table}

\begin{figure*}[t]
  \centering
  \includegraphics[width=0.95\textwidth]{./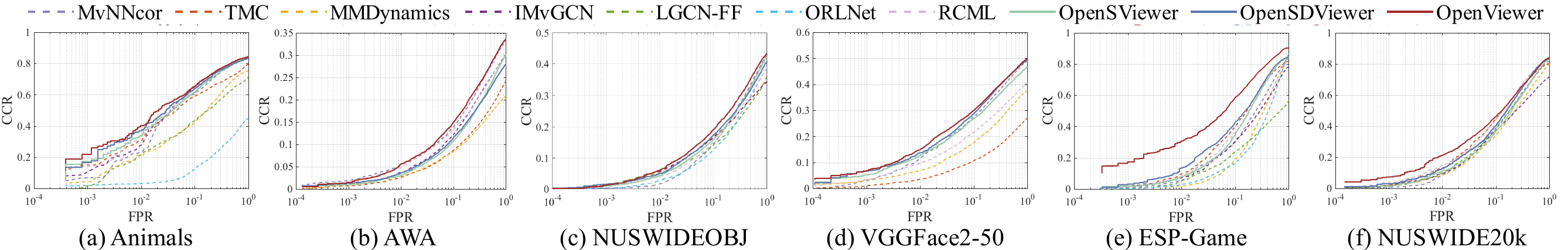}\\
  \caption{OSCR curves plotting the CCR over FPR on all test multi-view datasets for all compared methods.}
  \label{ccrfpr}
\end{figure*}

\begin{table*}[t]
\centering
\small
\resizebox{0.95\textwidth}{!}{
\begin{tabular}{cccccccccccccccc}
\toprule
Datasets & \multicolumn{5}{c}{Animals}  & \multicolumn{5}{c}{AWA}  & \multicolumn{5}{c}{NUSWIDEOBJ} \\  \cmidrule(r){2-6}   \cmidrule(r){7-11}  \cmidrule(r){12-16}  
Methods $\backslash$ CCR at FPR of & 0.5\% & 1.0\% & 5.0\% & 10\% & 50\% & 0.5\% & 1.0\% & 5.0\% & 10\% & 50\% & 0.5\% & 1.0\% & 5.0\% & 10\% & 50\% \\
\midrule
MvNNcor \cite{Xu2020Deepembedded} & 19.48 & 23.35 & 53.81 & 61.01 & 79.30 & 1.59 & 2.15 & 6.40 & 9.34 & 23.07 & 0.82 & 1.63 & 9.34 & 14.48 & 33.40 \\
TMC \cite{Han2021Trusted} & 24.30 & 30.36 & 51.22 & 59.25 & 72.04 & 1.67 & 2.60 & 6.09 & 8.68 & 18.05 & \textbf{4.17} & \underline{5.29} & 10.45 & 14.07 & 27.28 \\
MMDynamics \cite{Han2022Multimodal} & 16.93 & 21.51 & 35.65 & 42.39 & 69.11 & 1.78 & 2.75 & 7.04 & 9.86 & 20.19 & 3.39 & 5.10 & \underline{12.58} & \underline{17.72} & 30.59 \\
IMvGCN \cite{wu2023Interpretable} & 21.44 & 27.73 & 56.08 & 63.54 & 78.57 & 1.21 & 1.80 & 4.03 & 5.89 & 12.38 & 0.44 & 0.88 & 3.15 & 4.78 & 9.76 \\
LGCN-FF \cite{Chen2023Learnable} & 15.18 & 21.40 & 34.37 & 43.59 & 63.53 & - & - & - & - & - & - & - & - & - & - \\
ORLNet \cite{fang2024representation} & 10.76 & 12.46 & 23.08 & 31.71 & 49.81 & - & - & - & - & - & - & - & - & - & - \\
RCML \cite{Xu24Reliable} & 25.31 & 33.38 & 55.48 & 63.87 & \underline{80.03} & \textbf{3.52} & \underline{4.75} & \underline{10.11} & \underline{13.34} & \underline{24.69} & 2.99 & 4.26 & 10.44 & 15.22 & 30.19\\
\midrule
OpenSViewer & 27.64 & 33.83 & 53.50 & 61.75 & 79.54 & 2.34 & 3.09 & 7.55 & 10.77 & 23.30 & 3.02 & 4.37 & 11.41 & 16.08 & 32.64\\
OpenSDViewer & \underline{30.98} & \underline{37.05} & \underline{56.47} & \underline{64.21} & 79.61 & 2.36 & 3.56 & 8.09 & 11.49 & 23.54 & 3.60 & 5.02 & 12.47 & 17.43 & \underline{32.80} \\
OpenViewer & \textbf{31.56} & \textbf{40.24} & \textbf{58.07} & \textbf{65.38} & \textbf{80.78} & \underline{3.49} & \textbf{5.59} & \textbf{11.08} & \textbf{14.97} & \textbf{27.66} & \underline{4.12} & \textbf{6.16} & \textbf{14.17} & \textbf{19.12} & \textbf{35.19} \\
\midrule
\midrule
Datasets & \multicolumn{5}{c}{VGGFace2-50}  & \multicolumn{5}{c}{ESP-Game}  & \multicolumn{5}{c}{NUSWIDE20k} \\  \cmidrule(r){2-6}   \cmidrule(r){7-11}  \cmidrule(r){12-16}  
Methods $\backslash$ CCR at FPR of & 0.5\% & 1.0\% & 5.0\% & 10\% & 50\% & 0.5\% & 1.0\% & 5.0\% & 10\% & 50\% & 0.5\% & 1.0\% & 5.0\% & 10\% & 50\% \\
\midrule
MvNNcor \cite{Xu2020Deepembedded} & 8.29 & 11.65 & 22.31 & 27.53 & 43.01 & 3.48 & 5.72 & 21.48 & 30.15 & 68.22 & 2.90 & 8.24 & 26.59 & 38.69 & 70.62 \\
TMC \cite{Han2021Trusted} & 2.55 & 3.57 & 8.07 & 10.72 & 20.87 & 6.25 & 9.33 & 28.70 & 40.05 & 74.45 & 8.30 & 13.21 & \underline{33.04} & \underline{44.72} & 73.35 \\
MMDynamics \cite{Han2022Multimodal} & 5.55 & 6.91 & 13.70 & 17.62 & 31.53 & 0.99 & 1.52 & 8.92 & 18.71 & 64.20 & 5.67 & 8.77 & 24.63 & 33.88 & 66.16\\
IMvGCN \cite{wu2023Interpretable} & - & - & - & - & - & 4.69 & 7.10 & 20.84 & 30.65 & 62.34 & 9.15 & 11.65 & 24.88 & 35.97 & 63.42\\
LGCN-FF \cite{Chen2023Learnable} & - & - & - & - & - & 2.18 & 5.07 & 15.56 & 23.84 & 45.17 & - & - & - & - & \\
ORLNet \cite{fang2024representation} & - & - & - & - & - & 1.65 & 2.61 & 11.26 & 16.67 & 56.48 & 6.10 & 9.85 & 27.08 & 37.55 & 70.72\\
RCML \cite{Xu24Reliable} & 7.85 & 9.85 & 17.21 & 21.64 & 34.60 & 4.62 & 7.77 & 24.31 & 36.25 & 75.91 & \underline{10.16} & \underline{14.94} & 31.61 & 44.29 & \underline{73.37} \\
\midrule
OpenSViewer & 10.18 & 12.77 & 21.80 & 26.86 & 40.82 & 4.66 & 8.16 & 21.94 & 34.91 & 75.24 & 6.49 & 11.26 & 25.55 & 36.39 & 70.66 \\
OpenSDViewer & \underline{10.75} & \underline{13.75} & \underline{23.23} & \underline{28.80} & \underline{43.28} & \underline{7.96} & \underline{13.35} & \underline{30.92} & \underline{42.56} & \underline{76.76} & 7.89 & 12.67 & 28.84 & 40.34 & 72.56\\
OpenViewer & \textbf{12.16} & \textbf{15.05} & \textbf{25.39} & \textbf{30.05} & \textbf{44.14} & \textbf{25.89} & \textbf{30.61} & \textbf{46.45} & \textbf{58.65} & \textbf{83.52} & \textbf{16.35} & \textbf{21.31} & \textbf{36.76} & \textbf{46.28} & \textbf{73.47}\\
\bottomrule
\end{tabular}}
\caption{CCR at different FPR is given for all comparison under \textbf{\emph{openness}} = 0.1 setting, where the best and runner-up results are highlighted in \textbf{boldface} and \underline{underlined}, respectively. ``–" indicates the out-of-memory error.}
\label{CCRClassification1}
\end{table*}


\begin{figure*}[!htbp]
  \centering
  \includegraphics[width=0.95\linewidth]{./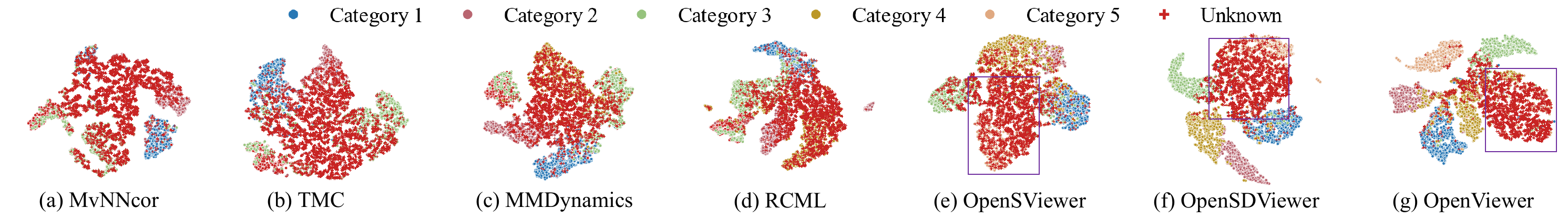}\\
  \caption{The t-SNE visualizations based on the fused representations of ESP-Game dataset.}
  \label{tnseESPGame1}
\end{figure*}

\begin{figure*}[!htbp]
  \centering
  \includegraphics[width=0.95\linewidth]{./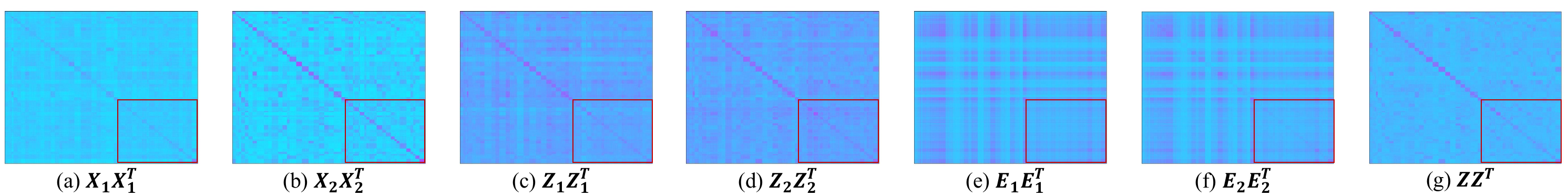}\\
  \caption{The heatmaps on feature, redundancy-free, noise and fusion matrices with $\mathbf{w}_1=0.5132$ and $\mathbf{w}_2=0.4868$ of Animals.}
  \label{heatmapAnimals}
\end{figure*}

\begin{figure}[!htbp]
  \centering
  \includegraphics[width=\linewidth]{./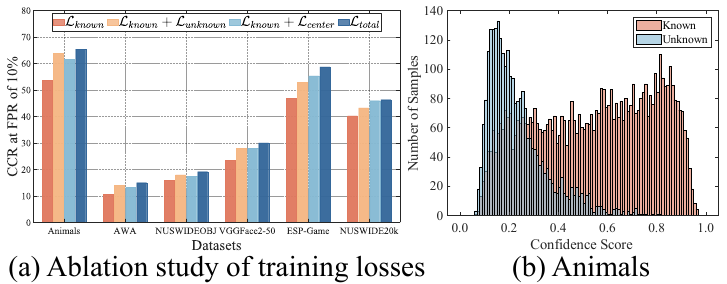}\\
  \caption{Ablation study of training losses on multi-view datasets with respect to CCR at FPR of 10\%.}
  \label{ablation}
\end{figure}


\begin{figure}[t]
  \centering
  \includegraphics[width=0.95\linewidth]{./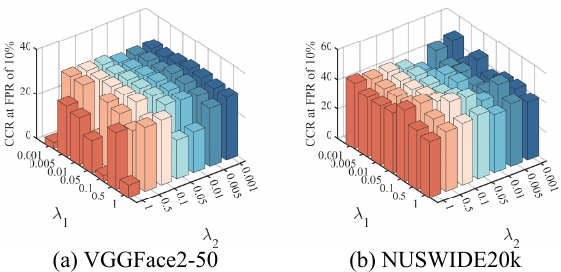}\\
  \caption{Parameter sensitivity of $\lambda_{1}$ and $\lambda_{2}$.}
  \label{Parametersensitivitylambda1lambda21}
\end{figure}

\begin{figure}[t]
  \centering
  \includegraphics[width=0.95\linewidth]{./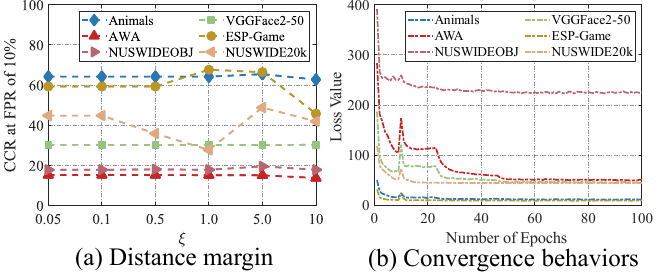}\\
  \caption{Parameter sensitivity of $\boldsymbol{\xi}$ and loss behaviors.}
  \label{Parametersensitivityuknownloss}
\end{figure}

We conduct experiments in challenging open-environment classification tasks under six well-known multi-view datasets. 
This includes two scenarios: 1) Animals, AWA, NUSWIDEOBJ, and VGGFace2-50 datasets contain different manual and deep features; 2) ESP-Game and NUSWIDE20k datasets include various vision and language features.
The statistics of these datasets are summarized in Table \ref{Datadescription} (details in \textbf{Appendix}).
Moreover, to simulate the performance of OpenViewer in open-environment, we also utilize the concept of $\textbf{\emph{openness}}$ \cite{Scheirer2012Toward} to divide known and unknown categories of multi-view datasets.
Meanwhile, the dataset is partitioned as follows: 10\% of the known class samples are allocated for training, another 10\% for validation, and the rest 80\% for testing.

Due to the limited exploration of related open multi-view learning tasks, we drew on backbone networks from other different multi-view tasks as compared methods (details in \textbf{Appendix}), including: MvNNcor \cite{Xu2020Deepembedded}, TMC \cite{Han2021Trusted}, MMDynamics \cite{Han2022Multimodal}, IMvGCN \cite{wu2023Interpretable}, LGCN-FF \cite{Chen2023Learnable}, ORLNet \cite{fang2024representation}, and RCML \cite{Xu24Reliable}. 

To estimate recognition performance effectively, the Open-Set Classification Rate (OSCR) \cite{Dhamija2018Reducing} is adopted as metrics, consisting of Correct Classification Rate (CCR) and False Positive Rate (FPR).

\subsubsection{Experimental Setups.}\label{ExperimentalSetup}

OpenViewer is implemented using the PyTorch on an NVIDIA GeForce RTX 4080 GPU with 16GB of memory. 
We train OpenViewer for 100 epochs with a batch size of 50, a learning rate of 0.01, $\boldsymbol{\xi} = 5$, and $\lambda_{1}$ and $\lambda_{2}$ selected from $\{10^{-3}, 5\times10^{-3}, \cdots, 10^{0}\}$.
The number of unfolding layers is set to $L = 1$ as suggested in \textbf{Appendix} Fig. \ref{Parametersensitivitylayers}, balancing complexity and efficiency while preserving interpretable expression-enhanced capabilities.
The ablation-models (\textbf{Appendix} Table \ref{NetworkExamples}) are OpenSViewer (w/o CD-Module and DN-Module) and OpenSDViewer (w/o DN-Module)  using for self-verification.

\subsubsection{Experimental Results.}\label{sec:exprq}
We present the overall OSCR curve results of all multi-view learning methods in classification under the condition of \textbf{\emph{openness}} = 0.1, as shown in Fig. \ref{ccrfpr}, from which we observe:
Intuitively, OpenViewer (red solid line) outperforms other methods (colored dashed lines) across all cases, whether on multi-feature or multi-modal datasets.
Although some multi-view methods, such as TMC and RCML, occasionally outperform OpenViewer in specific cases, it demonstrates more stable performance across all scenarios. 
Finally, OpenViewer's effective balance across different FPR and CCR results in outstanding performance.
On one hand, this may be attributed to the effective enhancement of both known and unknown expression through interpretable integration, such as the multi-feature expression enhancement seen with VGGFace2-50. 
On the other hand, the pseudo-unknown mechanism and perception-augmented loss contribute to significant confidence differentiation, as depicted in Fig. \ref{heatmapAnimals} (g) for Animals.

Meanwhile, we further extract representative \textbf{CCR at FPR of 0.5\%/1.0\%/5.0\%/10.0\%/50.0\%} and illustrate them in Table \ref{CCRClassification1}. 
It is evident that OpenViewer surpasses other methods in most cases, particularly under high FPR conditions (e.g., 50.0\%). 
Moreover, Fig. \ref{tnseESPGame1} (all in \textbf{Appendix} Fig. \ref{tnseESPGame}) indicates that OpenViewer achieves the highest separation between different categories and minimal overlap between unknown and known categories.
Additionally, Fig. \ref{heatmapAnimals} clearly demonstrates the impact of functionalized priors in OpenViewer. 
Original features exhibit complementary information but also contain redundancy, noise, and inappropriate confidences. 
Whereas OpenViewer effectively filters relevant features and noise, and suppresses inappropriate confidences for unknown (the red box). 
Ultimately, the weight-guided complementary fusion effectively enhances expression with a clean diagonalized structure, revealing high response for known and low response for unknown.

\subsubsection{Ablation Study.}\label{sec:abla}
To verify that each module and loss term contributes to address openness challenges, we conduct ablation experiments.
First, when we comprehensively examine the variants OpenSViewer, OpenSDViewer, and OpenViewer portrayed in Fig. \ref{ccrfpr}, Fig. \ref{tnseESPGame1} (\textbf{Appendix} Fig. 1), and Table \ref{CCRClassification1}, we can excitedly discover that the addition of each interpretable module promotes performance improvement and effective separation of samples.
For example, after adding the diversity noise processing module (DN-Module), the performance of ESP-Game improved from 42.56\% to 58.65\%, with increased inter-class separability between known and unknown. 
This improvement can be attributed to the removal of multi-view noise, which enhances feature expression.
Furthermore, Fig. \ref{ablation} (a) progressively highlights the generalized contributions of all loss components. 
From this, their combination can promote the model's generalization by distinguishing between normal known and undue unknown confidences.
Taking the Animals dataset as an example, Fig. \ref{ablation} (b) depicts that training regime accentuates differences in confidence scores between known and unknown distributions, aiding in recognition. 
Specifically, the unknown loss enhances sample discrimination, while the center loss promotes intra-class compactness.

\subsubsection{Parameter Sensitivity Analysis.} \label{ParameterAnalysisRQ3}
First, Fig. \ref{Parametersensitivitylambda1lambda21} (all in \textbf{Appendix} Fig. \ref{Parametersensitivitylambda1lambda2}) illustrates the parameter sensitivity of OpenViewer on two representative datasets in terms of $\lambda_1$ and $\lambda_2$ of loss \eqref{totalloss}.
The model performance is generally robust in most cases, but it collapses when $\lambda_2$ becomes too large, causing all samples to provide meaningless confidences across all classes, with no clear winning class.
Second, Fig. \ref{Parametersensitivityuknownloss} (a) reveals when the parameter $\xi$ is set to 5, the feature range is adequate to effectively distinguish.
However, when this value is exceeded, the wider feature range causes overlap among known classes, leading to a decline in overall performance.
Finally, we showcase Fig. \ref{Parametersensitivityuknownloss} (b) to elucidate the behaviors between loss values and training epochs. 
The curve displays that after 100 training epochs, the loss value stabilizes, indicating convergence and underscoring its stability, as depicted in \textbf{Theoretical Analysis}.

\section{Conclusion and Future Work}\label{Conclusion}
In this paper, we proposed OpenViewer to address the openness challenges in open-settings.
OpenViewer began with a pseudo-unknown sample generation mechanism to previously adapt to unknown, followed by a multi-view expression-enhanced deep unfolding network to offer a more interpretable integration mechanism.
Additionally, OpenViewer employed a perception-augmented open-set training regime to improve generalization between known and unknown classes.
Extensive experiments on diverse multi-view datasets showed that OpenViewer outperformed existing methods in recognition while effectively tackling openness challenges.
In future work, we will explore more sophisticated openness-aware circumstances based multi-view learning, including heterogeneous or incomplete data. 

\bibliography{ML}

\clearpage

\appendix

\setcounter{section}{0}
\renewcommand\thesection{\Alph{section}}



\section{Revisiting and Discussion}\label{Revisiting}

\setcounter{table}{0}
\setcounter{figure}{0}

\begin{algorithm}[!htbp]
\caption{OpenViewer}
\label{algorithmOpenViewer}
 \begin{algorithmic}[1]
\REQUIRE{Multi-view original known data $\mathcal{D}_{original}$, training epochs $T$, batch iterations $B = \lceil\frac{N}{Batchsize}\rceil$, the number of unfolding layers $L$, $\mathbf{Beta}$ distribution parameter $\omega$, trade-off parameters $\lambda_{1}, \lambda_{2}$, distance margin $\mathbf{\xi}$, and learning rate $\eta$}.
\ENSURE {Obtain predictive distribution $P(\mathbf{Z}_{test})$.}
\STATE {Initialize network parameters $\mathbf{\Theta}=\{\mathbf{R}, \mathbf{U}, \mathbf{M}, \theta_{v}, \rho_{v}\}$;}
\STATE {Initialize centroids $\{\boldsymbol{o}_{v}\}_{v=1}^{V}$ and center vectors $\{\boldsymbol{c}^{\hat{\boldsymbol{y}}_{c}}\}_{c=1}^{C}$;}
\FOR {$t = 1 \rightarrow T$}
\FOR {$b = 1 \rightarrow B$}
\STATE {Generate the pseudo-unknown data set $\mathcal{D}_{generated}$ by Eq. (1);}
\STATE {Concatenate $\mathcal{D}_{original}$ and $\mathcal{D}_{generated}$ to $\mathcal{D}_{train}$;}
\FOR {$l = 1 \rightarrow L$}
\STATE {Calculate and obtain the complementarity enhanced representation $\mathbf{Z}$ by network (8);}
\ENDFOR
\STATE {Compute the loss by open-set training regime (13);}
\STATE {Update $\mathbf{\Theta}$ though backward propagation;}
\ENDFOR
\ENDFOR
\RETURN{Obtain predictive distribution $P(\mathbf{Z}_{test})$.}
\end{algorithmic}
\end{algorithm}

\begin{table*}[t]
\centering
\resizebox{\textwidth}{!}{
\begin{tabular}{c|c|l}
\toprule
\multicolumn{1}{c|}{Variant Models} & \multicolumn{1}{c|}{Concretized Optimization Problems}  & \multicolumn{1}{c}{Composition of Deep Unfolding Network Modules}\\
\midrule
\multirow{2}{*}{OpenSViewer} & \multirow{2}{*}{$\mathop{\min}\limits_{\mathbf{Z}_{v}}\sum\limits_{{v=1}}^{V}(\frac{1}{2}\Vert\mathbf{X}_{v}-\mathbf{Z}_{v}\mathbf{D}_{v}\Vert_{F}^{2}+\alpha \Vert\mathbf{Z}_{v}\Vert_{1})$} & 
\textbf{RF-Module:} $\mathbf{Z}_{v}^{(l+1)}\leftarrow\mathbfcal{S}_{\theta_{v}^{(l)}}\left(\mathbf{Z}_{v}^{(l)}\mathbf{R}+\mathbf{X}_{v}\mathbf{U}\right)$\\
& & \textbf{CW-Fusion:} $\mathbf{Z}^{(l+1)}\leftarrow\sum\limits_{v=1}^V \mathbf{w}_{v}^{(l+1)} \mathbf{Z}_v^{(l+1)}$\\
\midrule
\multirow{3}{*}{OpenSDViewer} & \multirow{3}{*}{$\mathop{\min}\limits_{\mathbf{Z}_{v}, \mathbf{D}_{v}}\sum\limits_{{v=1}}^{V}(\frac{1}{2}\Vert\mathbf{X}_{v}-\mathbf{Z}_{v}\mathbf{D}_{v}\Vert_{F}^{2}+\alpha \Vert\mathbf{Z}_{v}\Vert_{1}+\frac{\beta}{2}\Vert\mathbf{D}_{v}\Vert_{F}^{2})$} &
\textbf{RF-Module:} $\mathbf{Z}_{v}^{(l+1)}\leftarrow\mathbfcal{S}_{\theta_{v}^{(l)}}\left(\mathbf{Z}_{v}^{(l)}\mathbf{R}+\mathbf{X}_{v}(\mathbf{D}_{v}^\top)^{(l)}\mathbf{U}\right)$ \\
& & \textbf{CD-Module:} $\mathbf{D}_{v}^{(l+1)}\leftarrow\mathbf{M}(\mathbf{Z}_{v}^\top)^{(l+1)}\mathbf{X}_{v}$ \\
& & \textbf{CW-Fusion:} $\mathbf{Z}^{(l+1)}\leftarrow\sum\limits_{v=1}^V \mathbf{w}_{v}^{(l+1)} \mathbf{Z}_v^{(l+1)}$\\
\midrule
\multirow{4}{*}{OpenViewer} & \multirow{4}*{$\mathop{\min}\limits_{\mathbf{Z}_{v}, \mathbf{D}_{v},\mathbf{E}_{v}}\sum\limits_{{v=1}}^{V}(\frac{1}{2}\Vert\mathbf{X}_{v}-\mathbf{Z}_{v}\mathbf{D}_{v}-\mathbf{E}_{v}\Vert_{F}^{2}+\alpha \Vert\mathbf{Z}_{v}\Vert_{1} +\frac{\beta}{2}\Vert\mathbf{D}_{v}\Vert_{F}^{2}+\gamma \Vert\mathbf{E}_{v}\Vert_{2,1})$} &
\textbf{RF-Module:} $\mathbf{Z}_{v}^{(l+1)}\leftarrow\mathbfcal{S}_{\theta_{v}^{(l)}}\left(\mathbf{Z}_{v}^{(l)}\mathbf{R}+(\mathbf{X}_{v}-\mathbf{E}_{v}^{(l)})(\mathbf{D}_{v}^\top)^{(l)}\mathbf{U}\right)$ \\
& & \textbf{CD-Module:} $\mathbf{D}_{v}^{(l+1)}\leftarrow\mathbf{M}(\mathbf{Z}_{v}^\top)^{(l+1)}(\mathbf{X}_{v}-\mathbf{E}_{v}^{(l)})$ \\
& & \textbf{DN-Module:} $\mathbf{E}_{v}^{(l+1)}\leftarrow\mathbfcal{P}_{\rho_{v}^{(l)}}\left(\mathbf{X}_{v}- \mathbf{Z}_{v}^{(l+1)}\mathbf{D}_{v}^{(l+1)}\right)$\\
& & \textbf{CW-Fusion:} $\mathbf{Z}^{(l+1)}\leftarrow\sum\limits_{v=1}^V \mathbf{w}_{v}^{(l+1)} \mathbf{Z}_v^{(l+1)}$\\
\bottomrule
\end{tabular}}
\caption{The generalized objective problem (2) leverages the ADMM optimizer to obtain the deep unfolding network (8). 
This inspires various multi-view ablation deep unfolding networks with diverse modules combination.}
\label{NetworkExamples}
\end{table*}

\subsubsection{Revisiting.} To address the openness challenges in multi-view learning within open environments, we propose a expression-enhanced deep unfolding network and a perception-augmented open-set training regime, designed to manage both known and unknown instances. 
OpenViewer decouples the multi-view integration process, enabling the model to effectively distinguish between known and unknown, thus mitigating issues of interpretability and generalization. 
OpenViewer promotes the expression of known and unknown through the unfolding network, ensuring that known parts are highly responsive while unknown parts highlight inappropriate confidences. 
Additionally, the loss assistance model further strengthens the confidences in known while suppressing undue confidences in unknown.
OpenViewer can be summarized as Algorithm \ref{algorithmOpenViewer}.
\subsubsection{Discussion.} 1) \textbf{Connections with Existing Multi-view Methods:} OpenViewer combines the prior knowledge and architecture of heuristic \cite{Wang22Alignthen, Liu24Samplelevel} and deep \cite{Lin23Dualcontrastive, Du2024UMCGL} methods to adapt to unknown class samples while ensuring interpretability;
2) \textbf{Differences with Existing Interpretable and Open-set Methods:} Unlike existing interpretable methods \cite{wu2023Interpretable, Weerdt24DeepUnfolding, fang2024representation}, OpenViewer decouples model interpretability in mixed scenarios through multi-view priors, intuitively reflecting the essence and function of data operations during the integration process in openness multi-view learning.
Furthermore, OpenViewer distinguishes itself from \cite{Duan23Graphanomaly, Gou24Test}, as it is one of the few open-set approaches that simultaneously ensures the perception of both known and unknown instances in multi-view scenarios while also addressing interpretability.

\section{Supplementary of Interpretability Boundary}\label{SupplementaryofNetworkModuleConvergence}

Certainly, the construction of network modules in OpenViewer adheres to the characteristics commonly associated with heuristic methods. 
As a result, by employing the conventional approach of heuristic methods and updating each sub-variable problem using ADMM, we can theoretically ensure the interpretability boundary of OpenViewer.
Moreover, if we establish the interpretability boundary of each individual module, it naturally follows that OpenViewer, comprising these modules, will also converge and be bounded. 
Embracing these principles will guide us in proving and analyzing the interpretability boundary of OpenViewer.
For the convenience of analysis, we can divide these modules into two classes: RF-Module, CD-Module, and DN-Module with parameter updating, and CW-Fusion modules only need self-optimization without parameters.
Then, the boundary will also be analyzed according to this classification. 
If they are all convergent, then as the previous analysis, their overall combination can also ensure boundness.

\textbf{CW-Fusion:} In theory, this module converges due to the stability of weight calculation and the characteristics of linear combination.

\textbf{RF-Module, CD-Module, and DN-Module:} Unlike the above modules, these are iterative network modules with parameters and cannot be subjected to analyze as the above modules.
Fortunately, several works \cite{Liu21EIGNN, Liu2022MGNNI} have provided us with ideas for analyzing the boundness of such modules. 
Following them, we first need to introduce \textbf{Definition \ref{Definition5}} of the Banach fixed Point Theorem \cite{Shukla16Generalized} as a preparation. 

\begin{definition}\label{Definition5} 
Let $(\mathbf{X}, d)$ be a non-empty complete metric space with a contraction mapping $T: \mathbf{X} \rightarrow \mathbf{X}$.
If there exists a $\mu \in [0, 1)$ that makes $d(T(\mathbf{x}), T(\mathbf{y})) \leq \mu d(\mathbf{x}, \mathbf{y}), \forall \mathbf{x}, \mathbf{y} \in \mathbf{X}$.  
Then, a contraction $T$ admits a unique fixed-point $\mathbf{x}^{*}$ in $\mathbf{X}$ (\emph{i.e.,} $T(\mathbf{x}^{*})= \mathbf{x}^{*}$).
Furthermore, $\mathbf{x}^{*}$ can be found as follows: Start with an arbitrary element $\mathbf{x}_{0} \in \mathbf{X}$ and and define a sequence $\{\mathbf{x}_{n}\}_{n \in \mathbb{N}}$ by $\mathbf{x}_{n} = T\{\mathbf{x}_{n-1}\}$ for $n \geq 1$. 
Then $\lim_{n \rightarrow \infty} \mathbf{x}_{n}=\mathbf{x}^{*}$.
\end{definition} 

With the help of \textbf{Definition \ref{Definition5}}, we can transit to \textbf{Theorem \ref{Theorem1}}.

\begin{theorem}\label{Theorem1}
Given the bounded damping factor $\mu \in [0, 1)$, the modules for propagation is a contraction mapping, and the unique convergence solution $\mathbf{X}^{*}$ can be obtained.
\end{theorem}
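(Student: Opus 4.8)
The plan is to invoke the Banach Fixed Point Theorem (Definition~\ref{Definition5}) by showing that each parameterized propagation module acts as a contraction mapping on its variable. Since a finite composition of contractions is again a contraction (with damping factor at most the product of the individual factors), it suffices to establish the contraction property for the representative RF-Module update $T(\mathbf{Z}_{v}) = \mathbfcal{S}_{\theta_{v}}\big(\mathbf{Z}_{v}\mathbf{R} + (\mathbf{X}_{v}-\mathbf{E}_{v})(\mathbf{D}_{v}^\top)\mathbf{U}\big)$ and, analogously, for the CD- and DN-Modules, then to stack the resulting estimates.

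First I would exploit the fact that the reparameterized proximal operators $\mathbfcal{S}_{\theta_{v}}(\cdot)$ and $\mathbfcal{P}_{\rho_{v}}(\cdot)$ — the soft-thresholding and block-soft-thresholding maps — are non-expansive (i.e., $1$-Lipschitz), a standard property of proximal operators of convex penalties. Consequently the nonlinear part does not increase distance, and for any two iterates the problem reduces to controlling the linear factor:
\begin{equation}
d\big(T(\mathbf{Z}_{v}), T(\mathbf{Z}_{v}')\big) \leq \big\|(\mathbf{Z}_{v}-\mathbf{Z}_{v}')\mathbf{R}\big\| \leq \|\mathbf{R}\|\cdot d(\mathbf{Z}_{v}, \mathbf{Z}_{v}'),
\end{equation}
so the whole question collapses to bounding the spectral norm of $\mathbf{R} = \mathbf{I} - \frac{1}{L_{p_{v}}}\mathbf{D}_{v}\mathbf{D}_{v}^\top$.

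The key step is then to show $\|\mathbf{R}\| \leq \mu < 1$. Since $\mathbf{D}_{v}\mathbf{D}_{v}^\top$ is symmetric positive semidefinite its eigenvalues are nonnegative, and choosing $L_{p_{v}}$ as the Lipschitz constant of $\nabla\mathcal{I}(\cdot)$ (the largest eigenvalue of $\mathbf{D}_{v}\mathbf{D}_{v}^\top$) forces the eigenvalues of $\mathbf{R}$ into $[0,1]$; identifying the damping factor $\mu$ with the resulting spectral bound yields the contraction. I would argue that the same reasoning governs the CD-Module, whose linear map is regularized through $\mathbf{M} = (\mathbf{Z}_{v}^\top\mathbf{Z}_{v}+\beta\mathbf{I})^{-1}$ and therefore tightens the bound strictly below $1$, as well as the DN-Module, which is likewise a non-expansive proximal map composed with a bounded linear operator. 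Stacking these contraction estimates shows the composite propagation is a contraction, after which Definition~\ref{Definition5} applies verbatim to produce the unique fixed point $\mathbf{X}^*$ together with the iterate convergence $\lim_{n\to\infty}\mathbf{X}_{n} = \mathbf{X}^*$.

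The main obstacle I anticipate is securing the \emph{strict} inequality $\mu < 1$ rather than merely $\mu \leq 1$: when $\mathbf{D}_{v}\mathbf{D}_{v}^\top$ possesses a zero eigenvalue, the corresponding eigenvalue of $\mathbf{R}$ equals $1$, so contraction is not automatic and must be salvaged either through the regularization ($\beta > 0$ in the CD-Module) or by restricting to the subspace on which the operator is strictly contractive. Because the module parameters are \emph{learned} rather than fixed, the cleanest route is to absorb this subtlety into the hypothesis — treating the bounded damping factor $\mu \in [0,1)$ as the standing assumption — and then merely verify that under it the composed map stays contractive, which is precisely the form in which the theorem is phrased.
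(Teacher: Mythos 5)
Your proposal follows essentially the same route as the paper's own proof: both show the module update is a contraction with factor $\mu \in [0,1)$ by bounding the operator norm of its linear part, and then invoke the Banach fixed point theorem (Definition~\ref{Definition5}) to obtain the unique limit. The paper executes this by vectorizing the affine update, rewriting the linear part as a Kronecker-product matrix, and asserting that ``all matrices and layers are normalized'' so that its spectral norm lies in $[0,1)$; you instead bound $\Vert\mathbf{R}\Vert_2$ with $\mathbf{R}=\mathbf{I}-\frac{1}{L_{p_{v}}}\mathbf{D}_{v}\mathbf{D}_{v}^\top$ directly via the eigenvalues of the positive semidefinite matrix $\mathbf{D}_{v}\mathbf{D}_{v}^\top$, which is equivalent but more elementary and transparent. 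Your version is actually tighter at the two points where the paper is loose: (i) the paper states that its map $\varphi$ ``contains the proximal operator'' $\mathbfcal{S}_{\theta_{v}}$ but then silently drops it from the estimates, whereas you justify discarding it by the standard non-expansiveness (1-Lipschitz property) of soft-thresholding/proximal maps; (ii) you explicitly flag that a zero eigenvalue of $\mathbf{D}_{v}\mathbf{D}_{v}^\top$ makes $\mathbf{R}$ have an eigenvalue equal to $1$, so strict contraction is not automatic, and you resolve this by treating $\mu\in[0,1)$ as the standing hypothesis --- exactly the role played, without explanation, by the paper's normalization assumption. One minor quibble: your remark that the CD-Module's regularization $\mathbf{M}=(\mathbf{Z}_{v}^\top\mathbf{Z}_{v}+\beta\mathbf{I})^{-1}$ ``tightens the bound strictly below $1$'' is not quite the right reason ($\Vert\mathbf{M}\Vert_2\leq 1/\beta$ need not be less than $1$); the cleaner observation is that the CD update is a closed-form assignment that does not depend on the previous $\mathbf{D}_{v}$, hence is trivially a contraction in that variable.
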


\begin{proof}\label{Proof1}
For any matrix $\mathbf{B} \in \mathbb{R}^{B_1 \times B_2}$, we define the vectorization of the matrix by $vec[\mathbf{B}]\in\mathbb{R}^{B_1 \times B_2}$ and the Frobenius norm of the matrix by $\|\mathbf{B}\|_F$.
Let us first state some preliminary explanation for the framework derived networks in Table \ref{NetworkExamples}.
On this premise, we define the mapping function $\varphi$ that contains the proximal operator $\mathcal{S}_{\theta_{v}^{(l)}}$.
Then, $\varphi(\mathbf{Z}_v) = \mathbf{Z}_{v}\mathbf{R}+(\mathbf{X}_{v}-\mathbf{E}_{v})\mathbf{D}_{v}^\top\mathbf{U}$, and we want to present that the map $\varphi$ is contraction.
Using the property of the vectorization $vec$ and the Kronecker product $\otimes$,
\begin{equation}\label{ProblemDerivation}
\begin{aligned}
&vec[\varphi(\mathbf{Z}_v)]=vec\left[\mathbf{Z}_v\mathbf{R}\right]+vec[(\mathbf{X}_{v}-\mathbf{E}_{v})\mathbf{D}_{v}^\top\mathbf{U}]\\&=\left[\left(\mathbf{R}\right)^{\top}\right]vec[\mathbf{Z}_v]+vec[(\mathbf{X}_{v} -\mathbf{E}_{v})\mathbf{D}_{v}^\top\mathbf{U}]\\&=\mu\left(\left[\left(\mathbf{F}\right)^{\top}\otimes \mathbf{A}\right]\right) vec[\mathbf{Z}_v]+vec[(\mathbf{X}_{v} -\mathbf{E}_{v})\mathbf{D}_{v}^\top\mathbf{U}],
\end{aligned}
\end{equation}
where $\mu$ is an extracted parameter, and $\mathbf{A} \in \mathbb{R}^{N \times N}$ is a matrix whose diagonal element is $\frac{1}{\mu}$ and the rest is 0.
Therefore, for any $\mathbf{Z}_v, \mathbf{Z}_v^{\prime} \in \mathbb{R}^{N \times C}$,
\begin{equation}\label{ProblemDerivation2}
\begin{aligned}
&\left\|\varphi(\mathbf{Z}_v)-\varphi\left(\mathbf{Z}_v^{\prime}\right)\right\|_F  =\left\|vec[\varphi(\mathbf{Z}_v)]-vec\left[\varphi\left(\mathbf{Z}_v^{\prime}\right)\right]\right\|_2 \\&=\left\|\mu\left(\left[\left(\mathbf{F}\right)^{\top}\otimes \mathbf{A}\right]\right)\left(vec[\mathbf{Z}_v]-vec\left[\mathbf{Z}_v^{\prime}\right]\right)\right\|_2 \\&\leq \mu\left\|\left(\left[\left(\mathbf{F}\right)^{\top}\otimes \mathbf{A}\right]\right)\right\|_2\left\|vec[\mathbf{Z}_v]-vec\left[\mathbf{Z}_v^{\prime}\right]\right\|_2.
\end{aligned}
\end{equation}
Due to all matrices and layers are normalized, which means the spectral radius of $A = \left[\left(\mathbf{F}\right)^{\top}\otimes \mathbf{A}\right]\in [0, \frac{1}{\mu}-\frac{1}{\mu L_{p_{v}}}]$, and the range of $\left\|A\right\|_2$ belongs to $[0, 1)$,
\begin{equation}\label{ProblemDerivation3}
\begin{aligned}
&\left\|\varphi(\mathbf{Z}_v)-\varphi\left(\mathbf{Z}_v^{\prime}\right)\right\|_F \leq \mu\underbrace{\left\|\left(\left[\left(\mathbf{F}\right)^{\top}\otimes \mathbf{A}\right]\right)\right\|_2}_{[0, 1)}\\&\left\|vec[\mathbf{Z}_v]-vec\left[\mathbf{Z}_v^{\prime}\right]\right\|_2
\leq \mu\left\|\mathbf{Z}_v-\mathbf{Z}_v^{\prime}\right\|_F.
\end{aligned}
\end{equation}
Since $\mu \in [0, 1)$, this illustrates that $\varphi$ is a contraction mapping on the metric space ($\mathbb{R}^{N \times C}, \hat{d}$), where $\hat{d}\left(\mathbf{Z}_v, \mathbf{Z}_v^{\prime}\right)=\left\|\mathbf{Z}_v-\mathbf{Z}_v^{\prime}\right\|_F$.
Therefore, this module has a convergence lower bound, and the proof is completed.
\end{proof}

Similarly, we can also prove the boundness of other modules as above.
By leveraging the properties of matrix vectorization, the Kronecker product, and \textbf{Definition \ref{Definition5}}, the boundness of these interpretable modules can be demonstrated by \textbf{Theorem \ref{Theorem1}}. 
By ensuring the boundness of each of the four modules, we can establish the following \textbf{Theorem \ref{Theorem2}}.

\begin{theorem}\label{Theorem2} \textbf{(Interpretability Boundary)}
If each sub-module is convergent, then the stacked deep unfolding network consisting of all modules is bounded. 
\end{theorem}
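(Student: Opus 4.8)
The plan is to prove Theorem \ref{Theorem2} by a composition argument, upgrading the per-module contraction established in Theorem \ref{Theorem1} to a uniform bound on the full $L$-layer stack. First I would record the consequence of Theorem \ref{Theorem1} in a form suitable for composition: since each of the RF-, CD-, and DN-modules is a contraction with constant $\mu \in [0,1)$ on its metric space and admits a unique fixed point $\mathbf{X}^{*}$, the triangle inequality yields the affine growth estimate $\|g(\mathbf{x})\|_F \le \mu\|\mathbf{x}\|_F + (1+\mu)\|\mathbf{X}^{*}\|_F$ for each parametric module map $g$. Thus every such module sends bounded inputs to bounded outputs with a contraction factor strictly below one.

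Next I would dispose of the two structural ingredients not covered directly by Theorem \ref{Theorem1}. For CW-Fusion, the weights satisfy $\mathbf{w}_v \ge 0$ and $\sum_{v=1}^{V}\mathbf{w}_v = 1$ by Eq. \eqref{fusion3}, so the triangle inequality gives $\|\mathbf{Z}^{(l+1)}\|_F \le \sum_{v=1}^{V}\mathbf{w}_v\|\mathbf{Z}_v^{(l+1)}\|_F \le \max_v\|\mathbf{Z}_v^{(l+1)}\|_F$; the fusion is therefore nonexpansive and preserves boundedness without introducing any new constant. For the proximal maps $\mathbfcal{S}_{\theta_v}$ and $\mathbfcal{P}_{\rho_v}$ inside the RF- and DN-modules, I would note that soft-thresholding and the block $\ell_{2,1}$ operator are both $1$-Lipschitz (nonexpansive), so they can only shrink distances and do not spoil the contraction constants inherited from the linear parts. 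The one place needing explicit care is the inverse $(\mathbf{Z}_v^{\top}\mathbf{Z}_v + \beta\mathbf{I})^{-1}$ in the CD-module: since $\mathbf{Z}_v^{\top}\mathbf{Z}_v \succeq 0$ we have $\mathbf{Z}_v^{\top}\mathbf{Z}_v + \beta\mathbf{I} \succeq \beta\mathbf{I}$, hence its operator norm is at most $1/\beta$, which keeps that module's output bounded whenever its inputs are.

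Then I would assemble the stack. One layer is the composition of the four module maps, and the whole network is the $L$-fold iteration of the layer map $G = g_{\mathrm{CW}}\circ g_{\mathrm{DN}}\circ g_{\mathrm{CD}}\circ g_{\mathrm{RF}}$. Chaining the affine estimates from the first step, with the worst-case contraction factor $\mu$ and a combined additive constant $C$ collecting the fixed-point and external-data terms such as $\mathbf{X}_v$, yields $\|G^{(L)}(\mathbf{x})\|_F \le \mu^{L}\|\mathbf{x}\|_F + C\sum_{l=0}^{L-1}\mu^{l} \le \mu^{L}\|\mathbf{x}\|_F + \tfrac{C}{1-\mu}$, which is finite for every bounded initialization. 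Since this bound is independent of whether the input originates from a known or a pseudo-unknown sample, it establishes the claimed interpretability boundary.

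I expect the main obstacle to be the coupling between the three parametric modules. In Theorem \ref{Theorem1} each module was shown to be a contraction with the other block variables held fixed, whereas in the stacked network $\mathbf{Z}_v$, $\mathbf{D}_v$, and $\mathbf{E}_v$ are updated alternately and share the coupled terms $(\mathbf{X}_v - \mathbf{E}_v)$, $\mathbf{D}_v\mathbf{D}_v^{\top}$, and $\mathbf{Z}_v^{\top}\mathbf{Z}_v$. The delicate part is to verify that feeding the bounded output of one block as the input of the next does not amplify norms across a layer, so that the per-block growth constants combine into something the geometric summation above can still absorb. I would handle this by tracking the boundedness of each intermediate block explicitly along a single layer, using the $1/\beta$ bound on the CD inverse and the nonexpansiveness of the proximal maps to certify that the layer map $G$ has a finite additive constant and a net contraction factor at most $\mu$, after which the $L$-fold composition bound follows immediately.
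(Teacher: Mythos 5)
Your proposal is correct, and it rests on the same foundation as the paper — Theorem \ref{Theorem1}'s per-module contraction property — but what you actually wrote is substantially more of a proof than what the paper gives. The paper's entire argument for Theorem \ref{Theorem2} is one informal sentence: since each sub-module approaches its own optimal value, the joint network obtains a bounded solution ``by coordinating the sub-module solutions until convergence.'' No composition argument, no treatment of the fusion step, the proximal operators, or the CD-inverse, and no acknowledgment of the cross-block coupling issue. You supply exactly the missing quantitative content: the affine growth estimate $\|g(\mathbf{x})\|_F \le \mu\|\mathbf{x}\|_F + (1+\mu)\|\mathbf{X}^*\|_F$ extracted from contraction plus fixed point, the observation that CW-Fusion in Eq. \eqref{fusion3} is a convex combination and hence nonexpansive, the $1$-Lipschitz property of the soft-thresholding and block-$\ell_{2,1}$ proximal maps, the spectral bound $\|(\mathbf{Z}_v^{\top}\mathbf{Z}_v+\beta\mathbf{I})^{-1}\|_2 \le 1/\beta$, and the geometric-sum bound $\mu^L\|\mathbf{x}\|_F + C/(1-\mu)$ for the $L$-fold layer composition, which also makes Remark 1's ``regardless of initialization'' claim precise and uniform in $L$. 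Your flagged concern about coupling is the right one, and your resolution is sound: the net-contraction-factor-$\mu$ claim for the composed layer map does not follow from Theorem \ref{Theorem1} (which fixes the other blocks), but for the boundedness asserted by the theorem you do not need it — per-block bounded-input/bounded-output tracking along a layer, which your estimates provide, already suffices for any finite $L$, and the contraction is only needed if one wants the bound uniform as $L \to \infty$. In short, the paper's proof buys brevity at the cost of being an assertion rather than an argument; yours is the argument the paper should have given.
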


\begin{proof}\label{Proof2}
Since each sub-module gradually approaches its own optimal value, their joint deep unfolding network obtain the bounded solution of a large global problem by coordinating the sub-module solutions until convergence.
\end{proof}

\textbf{Theorem \ref{Theorem2}} ensures that the deep network exported through the ADMM method will be bounded to the same solution regardless of which starting point the iteration starts from.
This indicates that even in the face of different views, through interpretable operations, it is possible to ensure that information from these different views can be reasonably integrated, thereby improving the trustworthiness of feature expression enhancement.
It also ensures that the network has interpretable theoretical guarantees.

At this point, we have ensured the interpretability boundness of unfolding networks through \textbf{Theorem \ref{Theorem2}}.

\section{Supplementary of Generalization Support}\label{SupplementaryofOverallConvergence}

Here, \cite{Peng2022XAI} provides us with some ideas to prove the overall convergence.
First, we use the standard gradient descent methods to update parameters $\mathbf{\Theta}$ as $\mathbf{\mathbf{\Theta}}_{t+1}=\mathbf{\Theta}_{t}-\eta_{t} \nabla \mathcal{L}_{total}\left(\mathbf{\Theta}_{t}\right)$, where $\eta_{t}$ is a learning rate, $\nabla \mathcal{L}_{total}$ denotes the gradient of total losses, and we need to introduce \textbf{Definition \ref{Definition6}} as follows.
\begin{definition}\label{Definition6}
A function $f(x)$ is Lipschitz continuous on the set $\Phi$, if there exists a constant $\epsilon > 0$, $\forall x_{1}, x_{2} \in \Phi$ such that $\left\|f\left(x_1\right)-f\left(x_2\right)\right\| \leq \epsilon\left\|x_1-x_2\right\|$, where $\epsilon$ is the Lipschitz constant.  
\end{definition}

Namely, the objective function $\mathcal{L}_{total}(\mathbf{\Theta})$ is Lipschitz continuous \textit{i.i.f.} $\left\|\mathcal{L}_{total}\right\|\leq \epsilon$. 
In other words, to meet the Lipschitz continuity, we need to prove that the upper boundary of $\nabla\mathcal{L}_{total}$ exists. 
To this end, we propose \textbf{Theorem \ref{Theorem3}}.
\begin{theorem}\label{Theorem3}
There exist $\epsilon > 0$ such as $\nabla\mathcal{L}_{total} \leq \epsilon$, where $\epsilon$ can be $\frac{1}{N^{o}}\left\|\hat{\boldsymbol{p}}_{c}^{(i)}\right\|  + \frac{1}{N^{o}}\left\|\hat{\boldsymbol{y}}_{c}^{(i)}\right\| + 2\left\|\hat{\boldsymbol{z}}^{(i)}\right\|+2\boldsymbol{\xi}+\lambda_1(\frac{1}{C}\left\|\tilde{\boldsymbol{p}}^{(i)}\right\|+\frac{1}{C}+2\left\|\tilde{\boldsymbol{z}}^{(i)}\right\|)+\lambda_2(\left\|\hat{\boldsymbol{z}}_{c}^{(i)}\right\|+ \left\|\boldsymbol{c}^{\hat{\boldsymbol{y}}_{c}^{(i)}}\right\|+\varphi)$ at least.
\end{theorem}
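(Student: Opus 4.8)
The plan is to exploit the linear structure of the objective. Since $\mathcal{L}_{total}=\mathcal{L}_{known}+\lambda_{1}\mathcal{L}_{unknown}+\lambda_{2}\mathcal{L}_{center}$, gradients add linearly, so $\nabla\mathcal{L}_{total}=\nabla\mathcal{L}_{known}+\lambda_{1}\nabla\mathcal{L}_{unknown}+\lambda_{2}\nabla\mathcal{L}_{center}$, and the triangle inequality yields $\|\nabla\mathcal{L}_{total}\|\leq\|\nabla\mathcal{L}_{known}\|+\lambda_{1}\|\nabla\mathcal{L}_{unknown}\|+\lambda_{2}\|\nabla\mathcal{L}_{center}\|$. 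It therefore suffices to bound each of the three component gradients separately and then collect the constants; the grouping of the target $\epsilon$ into an unnormalized block, a $\lambda_{1}(\cdots)$ block, and a $\lambda_{2}(\cdots)$ block mirrors this decomposition exactly. Under the normalization already assumed in \textbf{Theorem \ref{Theorem1}}, bounding these feature-level gradients controls the parameter gradient $\nabla\mathcal{L}_{total}(\mathbf{\Theta}_{t})$ up to a constant, so it is enough to work at the feature level.

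First I would treat $\mathcal{L}_{known}$. Its cross-entropy term has the standard softmax gradient $\frac{1}{N^{o}}(\hat{\boldsymbol{p}}_{c}^{(i)}-\hat{\boldsymbol{y}}_{c}^{(i)})$ with respect to $\hat{\boldsymbol{z}}^{(i)}$, whose norm is at most $\frac{1}{N^{o}}\|\hat{\boldsymbol{p}}_{c}^{(i)}\|+\frac{1}{N^{o}}\|\hat{\boldsymbol{y}}_{c}^{(i)}\|$ by the triangle inequality. The margin penalty $\max(\boldsymbol{\xi}-\|\hat{\boldsymbol{z}}^{(i)}\|_{2},0)^{2}$, differentiated on its active region, produces a subgradient of norm $2|\boldsymbol{\xi}-\|\hat{\boldsymbol{z}}^{(i)}\|_{2}|\leq 2\boldsymbol{\xi}+2\|\hat{\boldsymbol{z}}^{(i)}\|$, which recovers the $2\|\hat{\boldsymbol{z}}^{(i)}\|+2\boldsymbol{\xi}$ contribution. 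For $\mathcal{L}_{unknown}$, the uniform-target cross-entropy yields $\frac{1}{C}\|\tilde{\boldsymbol{p}}^{(i)}\|+\frac{1}{C}$ by the same softmax argument, while the regularizer $\|\tilde{\boldsymbol{z}}^{(i)}\|_{2}^{2}$ has gradient $2\tilde{\boldsymbol{z}}^{(i)}$ of norm $2\|\tilde{\boldsymbol{z}}^{(i)}\|$. Finally, $\mathcal{L}_{center}$ has feature gradient $\hat{\boldsymbol{z}}_{c}^{(i)}-\boldsymbol{c}^{\hat{\boldsymbol{y}}_{c}^{(i)}}$, bounded by $\|\hat{\boldsymbol{z}}_{c}^{(i)}\|+\|\boldsymbol{c}^{\hat{\boldsymbol{y}}_{c}^{(i)}}\|$, plus a residual $\varphi$ arising from the dependence of the moving center on the update rule (Eq. 11), which stays finite because the centers remain bounded, consistent with the contraction mapping $\varphi$ of \textbf{Theorem \ref{Theorem1}}. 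Assembling these per-sample estimates in the order (known block)$\,+\,\lambda_{1}$(unknown block)$\,+\,\lambda_{2}$(center block) reproduces the claimed $\epsilon$, giving $\nabla\mathcal{L}_{total}\leq\epsilon$ and hence, via \textbf{Definition \ref{Definition6}}, the Lipschitz continuity of $\mathcal{L}_{total}$.

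The main obstacle I anticipate is not the algebra but the non-smoothness of the objective: the margin term $\max(\cdot,0)^{2}$ has a kink at $\|\hat{\boldsymbol{z}}^{(i)}\|_{2}=\boldsymbol{\xi}$, and the Euclidean norms are non-differentiable at the origin, so the gradients must be read as subgradients and I must verify that a bounded subgradient selection exists everywhere. This does hold, since on the inactive region the margin subgradient vanishes while on the active region the bound above applies, but it requires care to state cleanly. A second delicate point is pinning down the $\varphi$ contribution: I must confirm that the center-update dynamics of Eq. (11) keep the centers uniformly bounded during training, so that their effect on $\nabla\mathcal{L}_{center}$ is genuinely absorbed into a finite constant rather than growing over epochs. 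Once these two points are secured, the remaining work is routine triangle-inequality bookkeeping that collects the per-term bounds into the stated $\epsilon$.
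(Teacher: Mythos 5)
Your proposal is correct and follows essentially the same route as the paper's proof: the same linear decomposition of $\nabla\mathcal{L}_{total}$ via the triangle inequality, the same per-term softmax, margin, $\ell_2$-regularizer, and center gradient bounds, and the same assembly into the stated $\epsilon$ (the paper likewise treats the gradients formally and absorbs the center-update contribution into a finite constant). One cosmetic slip: the $\varphi$ appearing in this theorem is the bounded scalar $\lambda\,\delta(\hat{\boldsymbol{y}}_{c}^{(i)}=j)/\bigl(1+\sum_{i=1}^{N^{o}}\delta(\hat{\boldsymbol{y}}_{c}^{(i)}=j)\bigr)$ coming from the center-update term, not the contraction mapping $\varphi$ of Theorem 1 (the paper reuses the symbol), but your treatment of it as a finite constant is exactly what the paper does.
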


\begin{proof}\label{Proof3}
The derivation of $\mathcal{L}_{known}$, $\mathcal{L}_{unknown}$, and $\mathcal{L}_{center}$ are performed separately as
\begin{equation}\label{lknownproof}
\begin{aligned}
\nabla \mathcal{L}_{known}(\hat{\boldsymbol{z}}^{(i)}) = \frac{1}{N^{o}} \left(\hat{\boldsymbol{p}}_{c}^{(i)}-\hat{\boldsymbol{y}}_{c}^{(i)} \right) + 
2 (\|\hat{\boldsymbol{z}}^{(i)}\| - \boldsymbol{\xi}) \cdot \frac{\hat{\boldsymbol{z}}^{(i)}}{\|\hat{\boldsymbol{z}}^{(i)}\|},
\end{aligned}
\end{equation}
\begin{equation}\label{lunknownproof}
\nabla\mathcal{L}_{unknown}(\tilde{\boldsymbol{z}}^{(i)}) = \frac{1}{C} \left(\tilde{\boldsymbol{p}}^{(i)} - 1\right) + 2 \tilde{\boldsymbol{z}}^{(i)},
\end{equation}
\begin{equation}\label{lcenterproof}
\begin{aligned}
\nabla \mathcal{L}_{center}(\hat{\boldsymbol{z}}_{c}^{(i)}) &= (\hat{\boldsymbol{z}}_{c}^{(i)} - \boldsymbol{c}^{\hat{\boldsymbol{y}}_{c}^{(i)}}) + \lambda \frac{\delta\left(\hat{\boldsymbol{y}}_{c}^{(i)}=j\right)}{1 + \sum_{i=1}^{N^{o}} \delta\left(\hat{\boldsymbol{y}}_{c}^{(i)}=j\right)},
\end{aligned}
\end{equation}
where $\hat{\boldsymbol{p}}_{c}^{(i)} = P(\hat{\boldsymbol{z}}^{(i)}) = \frac{e^{\hat{\boldsymbol{z}}^{(i)}}}{\sum\limits_{j=1}^{C} e^{\hat{\boldsymbol{z}}_{j}^{(i)}}}$, $\tilde{\boldsymbol{p}}^{(i)} = P(\hat{\boldsymbol{z}}^{(i)}) $.
Then, according to the properties of matrices and their inequalities, the following inequalities hold,
\begin{equation}\label{inlknownproof}
\begin{aligned}
\left\|\nabla \mathcal{L}_{known}(\hat{\boldsymbol{z}}^{(i)})\right\| \leq \frac{1}{N^{o}}(\left\|\hat{\boldsymbol{p}}_{c}^{(i)}\right\|+\left\|\hat{\boldsymbol{y}}_{c}^{(i)}\right\|)+2(\left\|\hat{\boldsymbol{z}}^{(i)}\right\| + \boldsymbol{\xi}), 
\end{aligned}
\end{equation}
\begin{equation}\label{inlunknownproof}
\left\|\nabla\mathcal{L}_{unknown}(\tilde{\boldsymbol{z}}^{(i)}) \right\| \leq \frac{1}{C} \left(\left\|\tilde{\boldsymbol{p}}^{(i)}\right\| + 1\right) + 2 \left\|\tilde{\boldsymbol{z}}^{(i)}\right\|,
\end{equation}
\begin{equation}\label{inlcenterproof}
\left\|\nabla \mathcal{L}_{center}(\hat{\boldsymbol{z}}_{c}^{(i)})\right\| \leq \left\|\hat{\boldsymbol{z}}_{c}^{(i)}\right\| + \left\|\boldsymbol{c}^{\hat{\boldsymbol{y}}_{c}^{(i)}}\right\|+\varphi,
\end{equation}
where $\varphi = \lambda \frac{\delta\left(\hat{\boldsymbol{y}}_{c}^{(i)}=j\right)}{1 + \sum_{i=1}^{N^{o}} \delta\left(\hat{\boldsymbol{y}}_{c}^{(i)}=j\right)}$, $\lambda$ is a constant, $0 \leq \left\|\hat{\boldsymbol{p}}\right\| \leq 1$, $0 \leq \left\|\tilde{\boldsymbol{p}}\right\| \leq 1$, $0 \leq \left\| \hat{\boldsymbol{z}} \right\| \leq 1$, and $0 \leq \left\| \tilde{\boldsymbol{z}} \right\| \leq 1$.
Finally, based on all the above statements, we can obtain the following boundary inequalities as
\begin{equation}\label{intotalloss}
\begin{aligned}
\left\|\nabla \mathcal{L}_{total}(\boldsymbol{z})\right\| &\leq \left\|\nabla\mathcal{L}_{known}(\hat{\boldsymbol{z}})\right\|+\lambda_1\left\|\nabla\mathcal{L}_{unknown}(\tilde{\boldsymbol{z}})\right\|\\&+\lambda_2\left\|\nabla\mathcal{L}_{center}(\hat{\boldsymbol{z}})\right\|,
\end{aligned}
\end{equation}
where $\boldsymbol{z} = \{\hat{\boldsymbol{z}}, \tilde{\boldsymbol{z}}\}$.
It showcases that the training loss function $\mathcal{L}_{total}(\mathbf{\Theta})$ will be upper bounded by a positive real number $\epsilon$ when $\left\|\boldsymbol{z}\right\|$ is bounded. 
At this point, the proof is completed.
In fact, there exists the upper boundary of $\left\|\boldsymbol{z}\right\|$ for any real-world datasets.
\end{proof}

Based on \textbf{Theorem \ref{Theorem3}}, we have the following convergence \textbf{Theorem \ref{Theorem4}}.
$\mathcal{L}_{total}(\mathbf{\Theta})$ is abbreviated as $\mathcal{L}$.
\begin{theorem}\label{Theorem4}
One could always find an optimal loss $\mathcal{L}_{T}^{*}(\mathbf{\Theta})$, which is sufficiently close to the desired $\mathcal{L}^{*}(\mathbf{\Theta})$ after $T$ epochs, \emph{i.e.,} $\mathcal{L}_T^*-\mathcal{L}^* \leq \frac{\left\|\mathbf{\Theta}_1-\mathbf{\Theta}^*\right\|_F^2+\epsilon^2 \sum\limits_{t=1}^T (\eta_{t})^2}{2 \sum\limits_{t=1}^T \eta_{t}}$.
\end{theorem}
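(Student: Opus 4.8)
The plan is to run the classical (sub)gradient-descent convergence analysis, taking the gradient bound $\|\nabla\mathcal{L}_{total}\|\le\epsilon$ already secured in \textbf{Theorem \ref{Theorem3}} as the only quantitative input. First I would measure how the squared distance to an optimal parameter $\mathbf{\Theta}^*$ evolves under one step of the update rule $\mathbf{\Theta}_{t+1}=\mathbf{\Theta}_t-\eta_t\nabla\mathcal{L}(\mathbf{\Theta}_t)$. Substituting this rule and expanding the Frobenius norm gives
\[
\|\mathbf{\Theta}_{t+1}-\mathbf{\Theta}^*\|_F^2=\|\mathbf{\Theta}_t-\mathbf{\Theta}^*\|_F^2-2\eta_t\langle\nabla\mathcal{L}(\mathbf{\Theta}_t),\mathbf{\Theta}_t-\mathbf{\Theta}^*\rangle+\eta_t^2\|\nabla\mathcal{L}(\mathbf{\Theta}_t)\|_F^2,
\]
which isolates a cross term and a squared-gradient term to be controlled separately.

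Next I would bound these two terms. For the cross term I would use convexity of $\mathcal{L}$ toward the minimizer, $\langle\nabla\mathcal{L}(\mathbf{\Theta}_t),\mathbf{\Theta}_t-\mathbf{\Theta}^*\rangle\ge\mathcal{L}(\mathbf{\Theta}_t)-\mathcal{L}^*$, and for the last term I would apply $\|\nabla\mathcal{L}(\mathbf{\Theta}_t)\|_F^2\le\epsilon^2$ from \textbf{Theorem \ref{Theorem3}}. Rearranging the displayed identity then yields the per-step inequality
\[
2\eta_t\big(\mathcal{L}(\mathbf{\Theta}_t)-\mathcal{L}^*\big)\le\|\mathbf{\Theta}_t-\mathbf{\Theta}^*\|_F^2-\|\mathbf{\Theta}_{t+1}-\mathbf{\Theta}^*\|_F^2+\eta_t^2\epsilon^2.
\]

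I would then sum this inequality over $t=1,\dots,T$: the distance terms telescope, and discarding the nonnegative residual $\|\mathbf{\Theta}_{T+1}-\mathbf{\Theta}^*\|_F^2$ leaves $2\sum_{t=1}^T\eta_t(\mathcal{L}(\mathbf{\Theta}_t)-\mathcal{L}^*)\le\|\mathbf{\Theta}_1-\mathbf{\Theta}^*\|_F^2+\epsilon^2\sum_{t=1}^T\eta_t^2$. Finally, since the best iterate $\mathcal{L}_T^*=\min_{1\le t\le T}\mathcal{L}(\mathbf{\Theta}_t)$ satisfies $\mathcal{L}_T^*-\mathcal{L}^*\le\mathcal{L}(\mathbf{\Theta}_t)-\mathcal{L}^*$ for every $t$, I can lower the left-hand side to $2(\sum_{t=1}^T\eta_t)(\mathcal{L}_T^*-\mathcal{L}^*)$ and divide by $2\sum_{t=1}^T\eta_t$ to obtain the claimed bound. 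Specializing to the fixed step size $\eta_t=\eta$ makes the right-hand side $\frac{\|\mathbf{\Theta}_1-\mathbf{\Theta}^*\|_F^2}{2T\eta}+\frac{\eta\epsilon^2}{2}$, whose first term decays at rate $\mathcal{O}(1/T)$ and leaves the radius $\frac{\eta\epsilon^2}{2}$, thereby recovering the statement of Remark 2.

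The main obstacle is the convexity step used to lower-bound the cross term: $\mathcal{L}_{total}$ is the loss of a deep unfolding network and is not globally convex, so $\langle\nabla\mathcal{L}(\mathbf{\Theta}_t),\mathbf{\Theta}_t-\mathbf{\Theta}^*\rangle\ge\mathcal{L}(\mathbf{\Theta}_t)-\mathcal{L}^*$ need not hold verbatim. I would address this either by invoking convexity (or star-convexity toward $\mathbf{\Theta}^*$) locally on the region the iterates actually visit, or by stating the bound under such a regularity assumption; the remaining telescoping and best-iterate substitution are purely mechanical and carry through unchanged.
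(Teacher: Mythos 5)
Your proposal is correct and follows essentially the same route as the paper's own proof: expand $\|\mathbf{\Theta}_{t+1}-\mathbf{\Theta}^*\|_F^2$ under the update rule, lower-bound the cross term via the first-order convexity inequality, bound the squared-gradient term by $\epsilon^2$ from Theorem \ref{Theorem3}, telescope over $t=1,\dots,T$, and pass to the best iterate before dividing by $2\sum_{t=1}^T\eta_t$. The convexity caveat you flag at the end is genuine, but it is equally present (and less candidly handled) in the paper, which invokes $f(x^*)\ge f(x_t)+\nabla\mathcal{L}_t^{\top}(x^*-x_t)$ under the label of ``Lipschitz continuity and the definition of gradient'' without justifying it for the non-convex loss of the unfolding network, so your explicit local/star-convexity assumption makes the argument no weaker than the original and arguably more honest.
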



\begin{proof}\label{Proof4}
Let $\mathbf{\Theta}^*$ be the optimal parameters of OpenViewer, $\mathbf{\Theta}_{T}$ be the $T$-th parameters, and then we have
\begin{equation}\label{theorem3pro1}
\begin{aligned}
&~~~~~~~~~~\left\|\mathbf{\Theta}_{T+1}-\mathbf{\Theta}^*\right\|_F^2=\left\|\mathbf{\Theta}_T-\mathbf{\Theta}^*\right\|_F^2 \\&-2 \operatorname{tr}\left(\eta_T \nabla \mathcal{L}_T^{\top}\left(\mathbf{\Theta}_T-\mathbf{\Theta}^*\right)\right)+\eta_t^2\left\|\nabla \mathcal{L}_T\right\|_F^2.
\end{aligned}
\end{equation}
By recursively applying the above equation, we obtain
\begin{equation}\label{theorem3pro2}
\begin{aligned}
&~~~~~~~~~~\left\|\mathbf{\Theta}_{T+1}-\mathbf{\Theta}^*\right\|_F^2=\left\|\mathbf{\Theta}_1-\mathbf{\Theta}^*\right\|_F^2\\&-2 \sum_{t=1}^T \eta_t t r\left(\mathbf{\Theta}_t-\mathbf{\Theta}^*\right)+\sum_{t=1}^T \eta_t^2\left\|\nabla \mathcal{L}_t\right\|_F^2.
\end{aligned}
\end{equation}
As $\mathcal{L}(\cdot)$ satisfies the Lipschitz Continuity and according to the definition of gradient, \emph{i.e.,}
\begin{equation}\label{theorem3pro3}
\begin{aligned}
f\left(x^*\right) \geq f\left(x_t\right)+\nabla \mathcal{L}_t^{\top}\left(x^*-x_t\right),
\end{aligned}
\end{equation}
and then, we have
\begin{equation}\label{theorem3pro4}
\begin{aligned}
&~~\left\|\mathbf{\Theta}_{T+1}-\mathbf{\Theta}^*\right\|_F^2 \leq\left\|\mathbf{\Theta}_{\mathbf{1}}-\mathbf{\Theta}^*\right\|_F^2\\&-2 \sum_{t=1}^T \eta_t\left(\mathcal{L}_t-\mathcal{L}^*\right)+\epsilon^2 \sum_{t=1}^T \eta_t^2,
\end{aligned}
\end{equation}
\begin{equation}\label{theorem3pro5}
\begin{aligned}
2 \sum_{t=1}^T \eta_t\left(\mathcal{L}_t-\mathcal{L}^*\right) \leq\left\|\mathbf{\Theta}_1-\mathbf{\Theta}^*\right\|_F^2+\epsilon^2 \sum_{t=1}^T \eta_t^2,
\end{aligned}
\end{equation}
\begin{equation}\label{theorem3pro6}
\begin{aligned}
\mathcal{L}_t-\mathcal{L}^* \geq \min _{t=1,2, \cdots, T}\left(\mathcal{L}_t-\mathcal{L}^*\right)=\mathcal{L}_T^*-\mathcal{L}^*,
\end{aligned}
\end{equation}
where $\mathcal{L}_T^*$ is the best $\mathcal{L}$ found in $T$ epochs.
Combining inequalities \eqref{theorem3pro5}-\eqref{theorem3pro6}, we finally have
\begin{equation}\label{theorem3pro7}
\begin{aligned}
\mathcal{L}_T^*-\mathcal{L}^* \leq \frac{\left\|\mathbf{\Theta}_1-\mathbf{\Theta}^*\right\|_F^2+\epsilon^2 \sum\limits_{t=1}^T \eta_t^2}{2 \sum\limits_{t=1}^T \eta_t}.
\end{aligned}
\end{equation}
Here, the proof is completed.
\end{proof}

Grounded on \textbf{Theorem \ref{Theorem4}}, we could derive \textbf{Theorem \ref{Theorem5}}.
\begin{theorem}\label{Theorem5} \textbf{(Generalization Support)}
For the fixed step size (\emph{i.e.,} $\eta_{t} = \eta$) as $T \rightarrow \infty$, and given the existing upper bound of positive real number $\epsilon$, the difference $\mathcal{L}_T^*-\mathcal{L}^*$ generalizes to $\frac{\eta \epsilon^2}{2}$ with a convergence rate $\mathcal{O}(\frac{1}{T})$.
\end{theorem}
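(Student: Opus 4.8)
The plan is to specialize the finite-horizon bound of \textbf{Theorem \ref{Theorem4}} to the constant step-size regime and then take the limit $T \to \infty$, reading off both the limiting value and the rate at which it is approached. All the machinery is already in place: the gradient bound $\|\nabla \mathcal{L}_{total}\| \leq \epsilon$ from \textbf{Theorem \ref{Theorem3}} secures Lipschitz continuity, and \textbf{Theorem \ref{Theorem4}} converts this into
\[
\mathcal{L}_T^* - \mathcal{L}^* \leq \frac{\|\mathbf{\Theta}_1 - \mathbf{\Theta}^*\|_F^2 + \epsilon^2 \sum_{t=1}^T \eta_t^2}{2 \sum_{t=1}^T \eta_t},
\]
so the remaining argument reduces to an elementary evaluation of the two sums under $\eta_t \equiv \eta$.

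First I would substitute $\eta_t = \eta$, so that $\sum_{t=1}^T \eta_t = T\eta$ and $\sum_{t=1}^T \eta_t^2 = T\eta^2$, turning the right-hand side into $\frac{\|\mathbf{\Theta}_1 - \mathbf{\Theta}^*\|_F^2 + \epsilon^2 T\eta^2}{2T\eta}$. Next I would split this single fraction into an initialization-dependent term and a step-size-dependent term, obtaining
\[
\mathcal{L}_T^* - \mathcal{L}^* \leq \frac{\|\mathbf{\Theta}_1 - \mathbf{\Theta}^*\|_F^2}{2\eta T} + \frac{\eta \epsilon^2}{2}.
\]
This decomposition is the crux of the argument: the second summand is a constant floor of exactly $\frac{\eta \epsilon^2}{2}$, independent of $T$, while the first summand carries all of the $T$-dependence through its single factor of $T$ in the denominator.

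Finally I would let $T \to \infty$. The initialization term $\frac{\|\mathbf{\Theta}_1 - \mathbf{\Theta}^*\|_F^2}{2\eta T}$ vanishes, and since its numerator is a fixed constant it decays like $\tfrac{1}{T}$, establishing the advertised rate $\mathcal{O}(1/T)$; what survives is the residual radius $\frac{\eta \epsilon^2}{2}$, to which $\mathcal{L}_T^* - \mathcal{L}^*$ therefore generalizes. I do not anticipate a genuine obstacle, since the result follows by direct substitution and limit-taking. The only conceptual point worth stressing is \emph{why} the bound saturates at $\frac{\eta \epsilon^2}{2}$ rather than at zero: a fixed step size cannot drive the averaged suboptimality below the floor set by the gradient magnitude $\epsilon$, so the constant residual is intrinsic to constant-step-size descent and not an artifact of the estimates. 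If one instead wanted exact convergence to $\mathcal{L}^*$, the same decomposition would make clear that a diminishing schedule with $\sum_t \eta_t = \infty$ and $\sum_t \eta_t^2 < \infty$ is required, which situates the stated $\frac{\eta\epsilon^2}{2}$ result as precisely the controlled price of using a simple fixed $\eta$.
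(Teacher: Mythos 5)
Your proposal is correct and matches the paper's own proof essentially line for line: both substitute $\eta_t = \eta$ into the bound of Theorem 4, split the resulting fraction into the initialization term $\|\mathbf{\Theta}_1-\mathbf{\Theta}^*\|_F^2/(2\eta T)$ decaying at rate $\mathcal{O}(1/T)$ and the constant floor $\frac{\eta\epsilon^2}{2}$, and conclude by letting $T \to \infty$. Your closing remark on why the residual radius is intrinsic to a fixed step size is a nice addition, but the core argument is the same.
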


\begin{proof}\label{Proof5}
After $T$ epochs, we have
\begin{equation}\label{lemma1pro1}
\begin{aligned}
\mathcal{L}_T^*-\mathcal{L}^* & \leq \frac{\left\|\mathbf{\Theta}_1-\mathbf{\Theta}^*\right\|_F^2+T \epsilon^2 \eta^2}{2 T \eta}\\&=\frac{\left\|\mathbf{\Theta}_1-\mathbf{\Theta}^*\right\|_F^2 /(T \eta)+\eta \epsilon^2}{2}.
\end{aligned}
\end{equation}
The first term decreases at the rate of $\frac{1}{T}$, while the second term is a constant term for a fixed learning rate $\eta$.
Therefore, the convergence rate of the proposed method is $\mathcal{O}(\frac{1}{T})$ and the convergence radius is $\frac{\eta \epsilon^2}{2}$ with the upper bound of positive real number $\epsilon$.
Here, the proof is completed.
\end{proof}

\textbf{Theorem \ref{Theorem5}} highlights that will ultimately converge to $\mathcal{L}^*$, exhibiting a convergence radius of $\frac{\eta \epsilon^2}{2}$ within $T$ epochs.
Generalization support ensures that the OpenViewer's training loss continually approaches its optimum with increased iterations, which is crucial for handling multi-view data in open environments:
1) It demonstrates the OpenViewer's long-term generalization and stability, ensuring that even with complex data and open environments, OpenViewer can theoretically approach the optimal solution after sufficient iterations;
2) It ensures that the integration of information from different views converges to a consistent solution, reducing noise and improving the accuracy of overall decisions or predictions;
3) In open environments, when encountering unknown classes or data, OpenViewer can maintain stable generalization through appropriate adjustments;
4) It reinforces the theoretical foundation of other OpenViewer-like algorithm design, ensuring its reliability and effectiveness in practical open-setting applications.

\section{Theoretical Analysis: Complexity}\label{SupplementaryofTheoreticalTrustworthyAnalysis}


Then, we analyze the complexity of OpenViewer in each batch, including network modules and losses of the time and space complexity.

The time complexity of the proposed network modules is summarized as follows:
\textbf{RF-Module}: The primary operations involve matrix multiplications with complexities $\mathcal{O}(NC^2)$ and $\mathcal{O}(NCD_{v})$, resulting in an overall complexity of $\mathcal{O}(NC^2 + NCD_{v})$.
\textbf{CD-Module}: It includes operations such as $\mathbf{Z}_v^\top(\mathbf{X}_v - \mathbf{E}_v^{(l)})$. Thus, the total complexity is $\mathcal{O}(NCD_{v})$.
\textbf{DN-Module}: The key computations are $\mathbf{Z}_v^{(l+1)}\mathbf{D}_v^{(l+1)}$ and subtraction $(\mathbf{X}_v - \mathbf{Z}_v^{(l+1)}\mathbf{D}_v^{(l+1)})$, with a combined complexity of $\mathcal{O}(NCD_{v})$.
\textbf{CW-Fusion}: The fusion process is dominated by the centroid computation $\mathcal{O}(NCD_{v})$, inter-class distance calculation $\mathcal{O}(C^{2}D_{v})$, and the weighted fusion operation $\mathcal{O}(NCV)$, resulting in a complexity of $\mathcal{O}(NCD_{v}+C^{2}D_{v}+NCV)$.
Combining these, the total time complexity for the deep networks is $\mathcal{O}(NC(C+D_{v}+V)+C^{2}D_{v})$. 
Moreover, the space complexity of the deep unfolding networks is $\mathcal{O}(N(D_{v} + C)V)$ with the storage requirements for feature matrices, dictionaries, and fusion weights. 

The time complexity of training loss components is shown as follows:
1) $\mathcal{L}_{known}$ has a time complexity of $\mathcal{O}(N^{o}C)$, due to the cross-entropy calculation and margin distance term; 
2) The time complexity of $\mathcal{L}_{unknown}$ is $\mathcal{O}(N^{e}C + N^{e}D_{v})$, due to both the cross-entropy loss and regularization term; 
3) $\mathcal{L}_{center}$ has a time complexity of $\mathcal{O}(N^{o}C+N^{o})$ for computing the distance between samples and their class centers and updating the class centers.
In summary, the overall time complexity for the loss functions is $\mathcal{O}(N^{o}C+N^{e}D_{v})$.
Furthermore, the space complexity of the training losses is $\mathcal{O}(NC + ND_{v})$, encompassing the storage for predictions, centers, and intermediate calculations.

Overall, the time complexity of OpenViewer with $L$ layers costs $\mathcal{O}\left(\left(NC(C+D_{v}+V)+C^{2}D_{v}\right)L\right)$, and the space complexity of OpenViewer denotes $\mathcal{O}\left(N(D_{v}+C)V\right)$.

\section{Supplementary of Experiments}\label{SupplementaryofExperiments}

In this section, we present supplementary of experiments, including datasets, compared methods, and other results.

\subsection{Details of Datasets and Compared Methods}\label{DetailsofExperimentalSetup}

\subsubsection{Datasets.}\label{SupDatasetsandTestSettings}

Details for all test multi-view datasets are presented below.
Multi-feature datasets: 1) \textbf{Animals}\footnote{http://attributes.kyb.tuebingen.mpg.de/} is a deep multi-feature dataset that consists of 10,158 images from 50 animal classes with DECAF and VGG-19 features;
2) \textbf{AWA}\footnote{https://cvml.ista.ac.at/AwA/} is a large-scale dataset belonging to 50 animal categories, we extract of 6 features for all the images, that is, color histogram, local self-similarity, pyramidHOG, SIFT, colorSIFT, and SURF features;
3) \textbf{NUSWIDEOBJ}\footnote{http://lms.comp.nus.edu.sg/research/NUS-WIDE.htm} consists of 30,000 images distributed over 31 classes. 
We use five features provided by NUS, \emph{i.e.,} color histogram, color moments, color correlation, edge distribution and wavelet texture features;
4) \textbf{VGGFace2-50}\footnote{https://www.robots.ox.ac.uk/$\sim$vgg/data/vgg\_face2/} is a large-scale face recognition dataset. 
We extract a sub-dataset with 50 face categories, which consists of four features as LBP, HOG, GIST, Gabor features.
Multi-modal datasets: 1) \textbf{ESP-Game}\footnote{https://www.kaggle.com/datasets/parhamsalar/espgame} originates from an image annotation game played on a website, which contains 20,770 images, and each image is annotated by players with several descriptions. 
Among them, image features are extracted by VGG-19 networks, and text description features are extracted by BERT networks.
Here, we choose 11,032 images that are described with approximately five tags per image, and these images have a total of 7 classes;
2) \textbf{NUSWIDE20k}\footnote{https://lms.comp.nus.edu.sg/wp-content/uploads/2019/research/nuswide/NUS-WIDE.html} is real-world web image database sourced from Flickr, which consists of 269,648 social images with tags.
Meanwhile, we select a sub-dataset with 20,000 images, which contains 8 classes and each image is along with seven tags on average.

\subsubsection{Compared Methods.}\label{SupComparedMethods}

These comparison methods are described as follows.
1) \textbf{MvNNcor} \cite{Xu2020Deepembedded} modeled view-specific information and cross-correlations information through an interactive network to jointly make decisions and infer categories;
2) \textbf{TMC} \cite{Han2021Trusted} fused the uncertainty of multiple views at an evidence level with the Dempster-Shafer theory; 
3) \textbf{MMDynamics} \cite{Han2022Multimodal} introduced a trustworthy approach to multimodal classification by dynamically evaluating feature-level and modality-level informativeness;
4) \textbf{IMvGCN} \cite{wu2023Interpretable} enhanced the interpretability of neural networks by constructing a flexible graph filter and introducing orthogonal normalization; 
5) \textbf{LGCN-FF} \cite{Chen2023Learnable} integrated sparse autoencoders with a learnable GCN, enabling the simultaneous extraction of feature representations and node relationships within graphs;
6) \textbf{ORLNet} \cite{fang2024representation} derived an interpretable solution for explicit optimization representation learning objectives; 
7) \textbf{RCML} \cite{Xu24Reliable} provided decision results and attached reliabilities for conflictive multi-view data.
These multi-view models are trained and tested according to their own structure and utilize the same dataset settings as OpenViewer to test their performance in an open environment.


\subsection{Supplementary of Experimental Results}\label{ExperimentalSupplementaryofExperimentalResults}

\begin{figure*}[t]
  \centering
  \includegraphics[width=0.9\linewidth]{./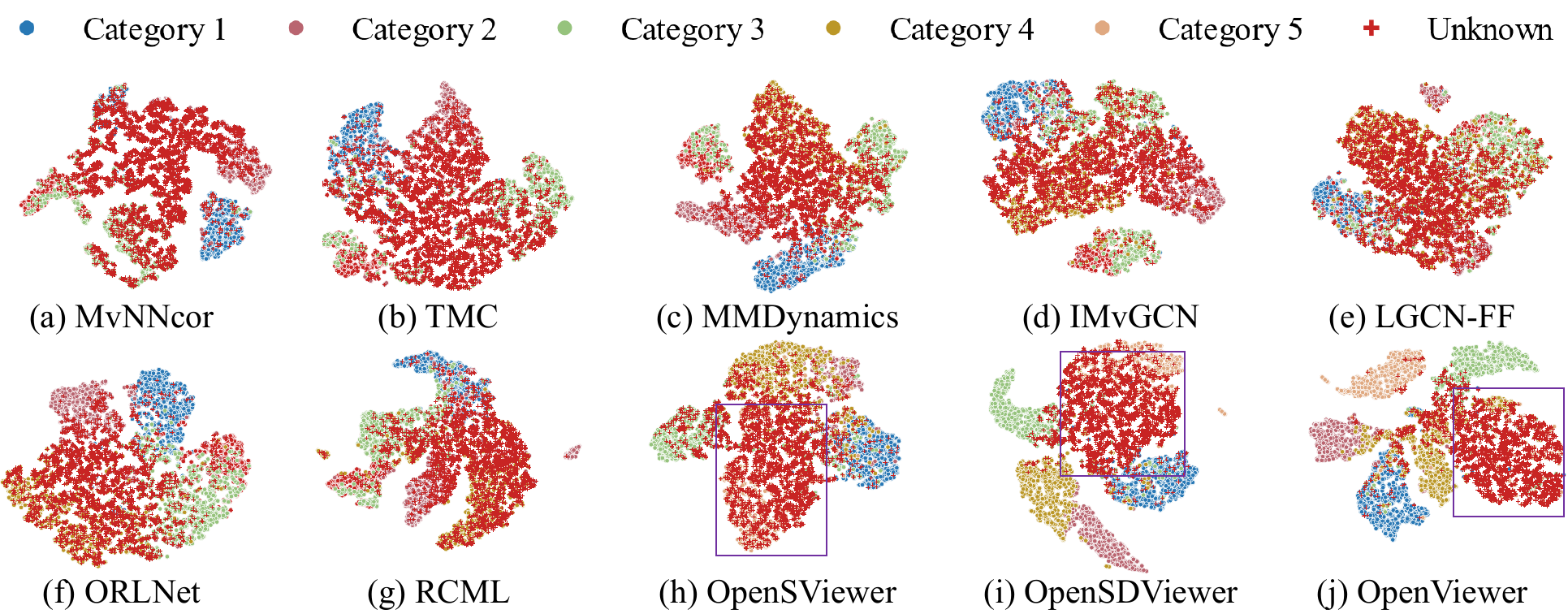}\\
  \caption{All compared methods’ t-SNE visualizations based on the representations $\mathbf{Z}$ of ESP-Game dataset.}
  \label{tnseESPGame}
\end{figure*}

\begin{figure*}[t]
  \centering
  \includegraphics[width=0.9\linewidth]{./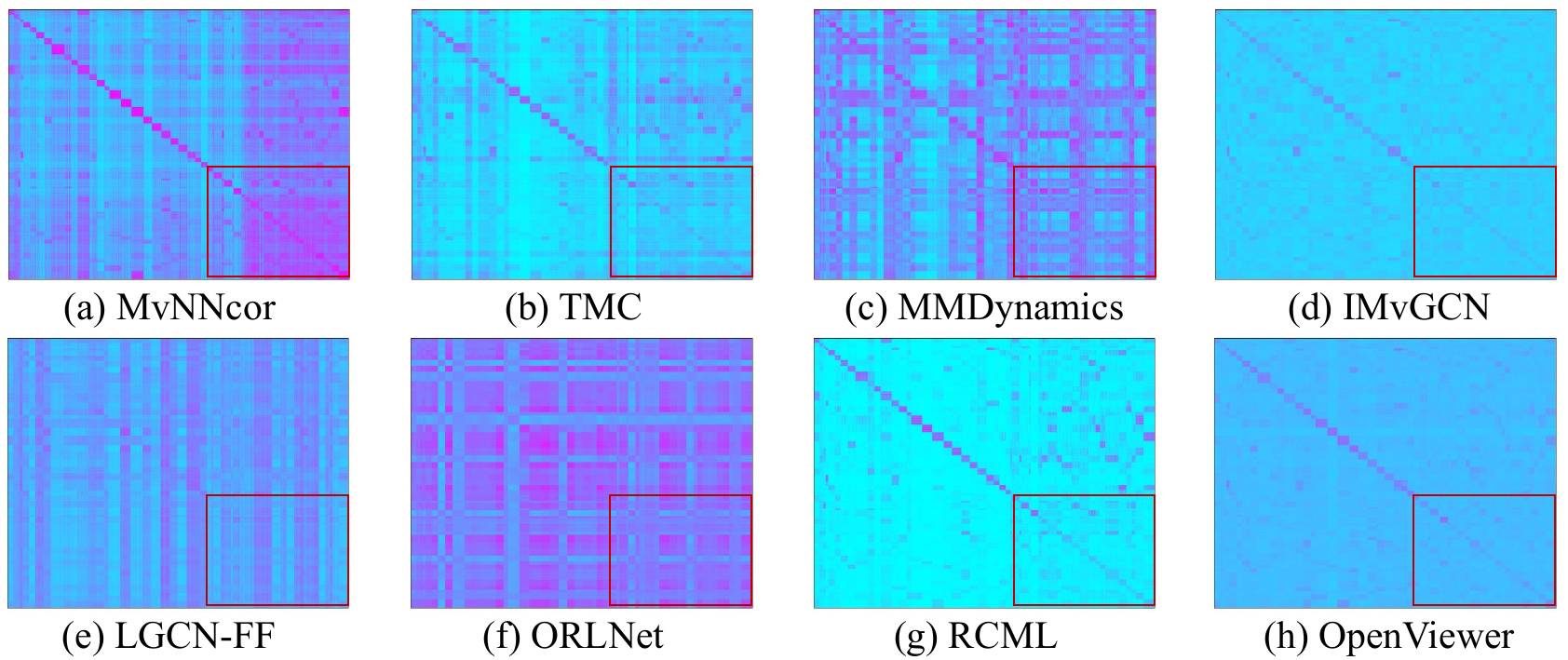}\\
  \caption{All compared methods’ heatmap visualizations based on $\mathbf{Z}\mathbf{Z}^T$ of Animals dataset.}
  \label{heatmapsAnimalssup}
\end{figure*}

Fig. \ref{tnseESPGame} supplements the t-SNE visualization results of all compared and ablation methods's representations $\mathbf{Z}$ on ESP-Game dataset.
Fig. \ref{heatmapsAnimalssup} presents the heatmap visualization results of the correlation matrices $\mathbf{Z}\mathbf{Z}^T$ for all compared methods on the Animals dataset. 
In OpenViewer's heatmap, the known parts exhibit clearly enhanced expression, while the unknown parts are effectively suppressed, preventing excessive responses to the known parts. 
In contrast, other methods like MvNNCor, MMDynamics, LGCN-FF, and ORLNet reveal noise and overconfidence between the known and unknown categories. 
The enhanced expression effect of IMvGCN is less significant across all parts. 
Among the comparison methods, RCML performs the best, but there are still some outliers that affect recognition.

\subsection{Supplementary of Parameter Sensitivity Analysis}\label{ExperimentalSupplementaryofParameterSensitivity}

\begin{figure}[!htbp]
  \centering
  \includegraphics[width=0.95\linewidth]{./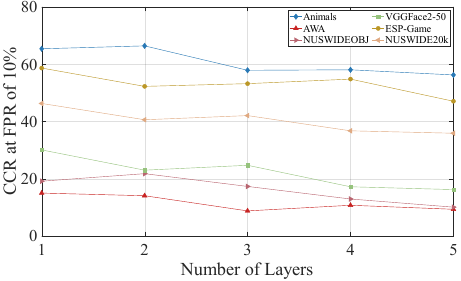}\\
  \caption{Parameter sensitivity of unfolding layers.}
  \label{Parametersensitivitylayers}
\end{figure}

\begin{figure*}[t]
  \centering
  \includegraphics[width=0.9\linewidth]{./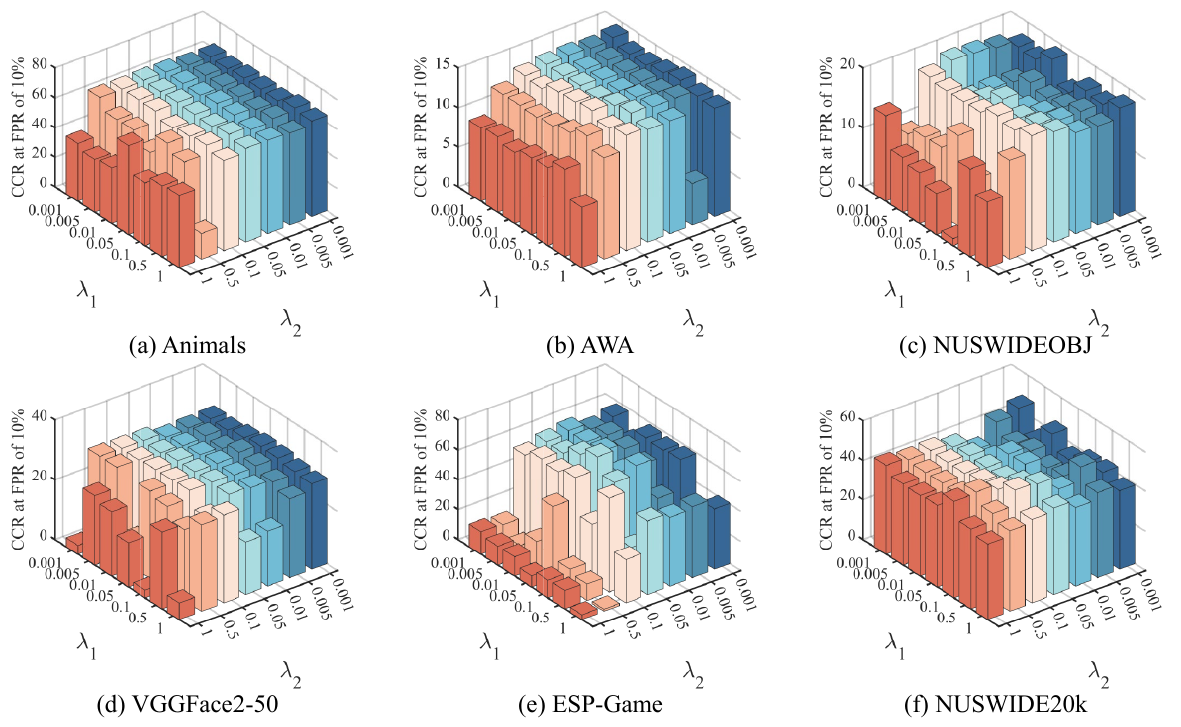}\\
  \caption{Parameter sensitivity of $\lambda_{1}$ and $\lambda_{2}$ of OpenViewer.}
  \label{Parametersensitivitylambda1lambda2}
\end{figure*}

\begin{figure*}[t]
  \centering
  \includegraphics[width=0.9\linewidth]{./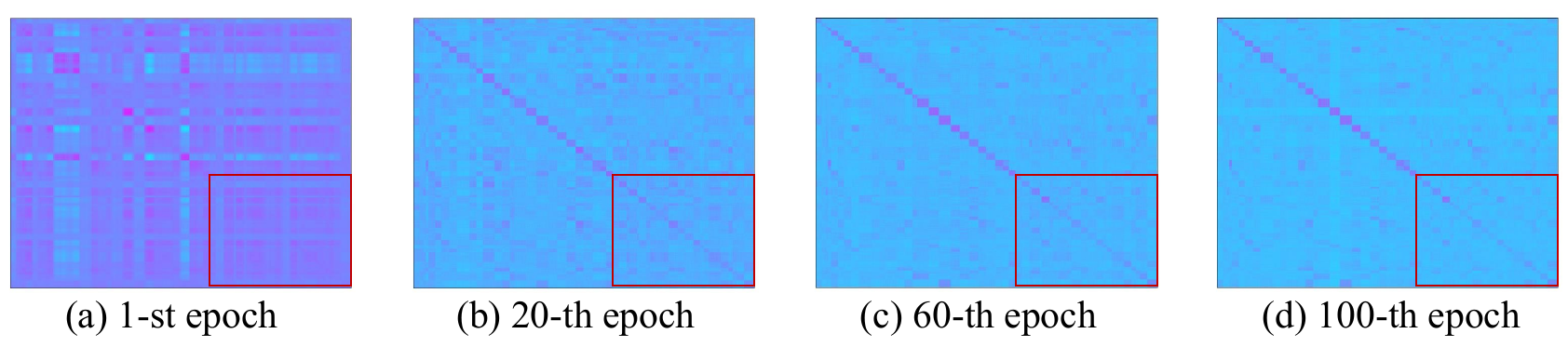}\\
  \caption{The heatmaps $\mathbf{Z}\mathbf{Z}^T$ as the changes over epochs of Animals dataset.}
  \label{heatmapAnimals2}
\end{figure*}

Fig. \ref{Parametersensitivitylayers} indicates the optimal number of unfolding layers for OpenViewer. 
The figure displays that with a single unfolding layer, the model effectively balances complexity and efficiency while maintaining interpretable expression-enhancement capabilities. 
Although performance may improve slightly on certain datasets with additional layers, the complexity also increases correspondingly.
For this reason, we set the number of unfolding layers to $L = 1$ in our experiment.
Fig. \ref{Parametersensitivitylambda1lambda2} provides all sensitivity curves for two trade-off parameters of training losses of OpenViewer, $\lambda_{1}$ and $\lambda_{2}$.
Fig. \ref{heatmapAnimals2} showcases the representations $\mathbf{Z}$ through the heatmaps of $\mathbf{Z}\mathbf{Z}^T$ across progressive training epochs.
The figure clearly displays that, initially, $\mathbf{Z}$ is saturated with noise and exhibits weak expression. 
As training epochs progress, noise is gradually eliminated, leading to an enhanced representation of known samples. 
Subsequently, the confidence of unknown samples on known classes (highlighted within the red box) are progressively suppressed. 
This progression substantiates OpenViewer's contributions to enhanced feature expression and augmented sample perception.

\end{document}